\newtheorem{theorem}{Theorem}
\title{KaLM: Knowledge-aligned Autoregressive Language Modeling via Dual-view Knowledge Graph Contrastive Learning}
\author{Peng Yu \textsuperscript{1},\ Cheng Deng\textsuperscript{1},\ Beiya Dai\textsuperscript{1},\ Xinbing Wang\textsuperscript{1},\ Ying Wen\textsuperscript{1}\thanks{\quad Ying Wen is the corresponding author.} \\
        \textsuperscript{1}Shanghai Jiao Tong University\\
        \texttt{ \{pursuit\_yp, davendw, beiya\_dai, xwang8, ying.wen\}@sjtu.edu.cn }\\
}
\begin{document}
\maketitle
\begin{abstract}
Autoregressive large language models (LLMs) pre-trained by next token prediction are inherently proficient in generative tasks. However, their performance on knowledge-driven tasks such as factual knowledge querying remains unsatisfactory. Knowledge graphs (KGs), as high-quality structured knowledge bases, can provide reliable knowledge for LLMs, potentially compensating for their knowledge deficiencies. 
Aligning LLMs with explicit, structured knowledge from KGs has been a challenge; previous attempts either failed to effectively align knowledge representations or compromised the generative capabilities of LLMs, leading to less-than-optimal outcomes. 
This paper proposes \textbf{KaLM}, a \textit{Knowledge-aligned Language Modeling} approach, which fine-tunes autoregressive LLMs to align with KG knowledge via the joint objective of explicit knowledge alignment and implicit knowledge alignment. The explicit knowledge alignment objective aims to directly optimize the knowledge representation of LLMs through dual-view knowledge graph contrastive learning. The implicit knowledge alignment objective focuses on incorporating textual patterns of knowledge into LLMs through triple completion language modeling. 
Notably, our method achieves a significant performance boost in evaluations of knowledge-driven tasks, specifically embedding-based knowledge graph completion and generation-based knowledge graph question answering. 
\end{abstract}

\section{Introduction}
\label{sec:introduction}
Large language models (LLMs) like PaLM 2 \cite{anil2023palm} and GPT-4 \cite{achiam2023gpt} have recently made remarkable advancements in a wide range of natural language processing tasks \cite{li2022pretrained,su2019generalizing}. 
However, LLMs still face challenges in tasks requiring factual or domain-specific knowledge, resulting in unsatisfactory performance in knowledge-driven tasks. 
From the perspective of knowledge representation, LLMs serve as parametric knowledge bases, providing implicit, non-deterministic knowledge, while knowledge graphs (KGs) function as structured knowledge bases, offering explicit, deterministic knowledge. 
KGs, commonly organized as factual knowledge triples describing relations between entities, can serve as a reliable knowledge source for LLMs. 
Aligning LLMs with KG knowledge can enhance the knowledge reasoning capabilities of LLMs and improve their performance on knowledge-driven tasks, such as knowledge graph completion (KGC) and knowledge graph question answering (KGQA).

Autoregressive LLMs pre-trained through next token prediction tasks often exhibit limitations in knowledge representation, leading to embeddings that lack diversity and specificity. This limitation becomes evident in tasks that demand distinctive sentence embeddings, such as dense retrieval and semantic search \cite{muennighoff2022sgpt,ma2023fine}. As demonstrated in Figure~\ref{fig-intro-a}, the representations generated by LLMs tend to be overly homogeneous across different pieces of knowledge, undermining their effectiveness in applications requiring fine-grained semantic distinctions.

\begin{figure*}[t]
\centering
\subfigure[LLaMA]{
\label{fig-intro-a}
\includegraphics[width=0.45\textwidth]{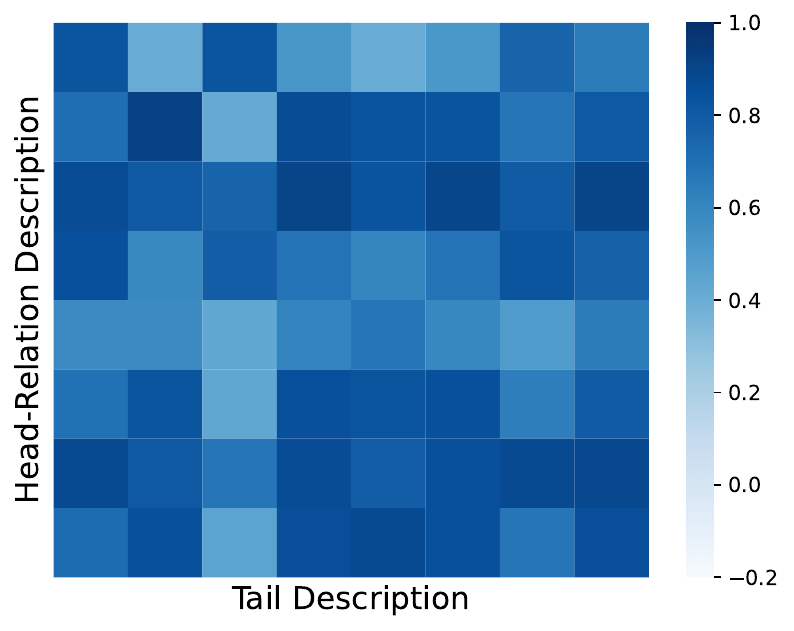}
}
\quad
\subfigure[KaLM]{
\label{fig-intro-b}
\includegraphics[width=0.45\textwidth]{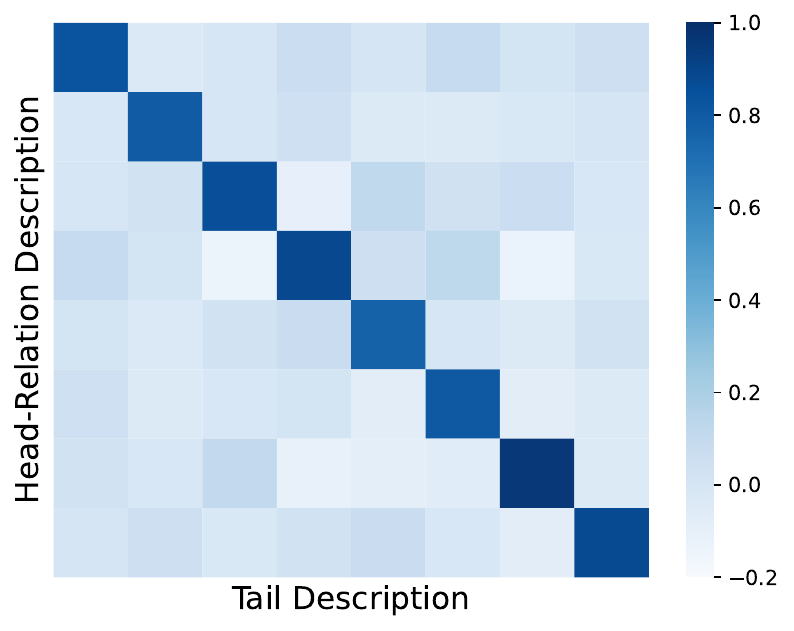}
}
\caption{Similarity matrix of knowledge representations of (a) Llama-2-7B \cite{touvron2023llama} and (b) KaLM. 
The values denote the cosine similarity between the head-relation and tail embedding. 
The diagonal elements represent positive <head-relation, tail> pairs from the same KG triple, 
which should maintain high similarity (darker color); 
off-diagonal elements represent negative <head-relation, tail> pairs from different KG triples, 
which should have lower similarity (lighter color).  
In an ideal setting, knowledge representations should be able to distinguish between different triples, while maintaining alignment and uniformity of the representation, as shown in Figure~\ref{fig-intro-b}. 
}
\label{fig-intro}
\end{figure*}


The concept of explicit knowledge alignment is introduced to directly optimize the knowledge representation within language models by devising direct knowledge training objectives. This strategy emerges in response to the observed degradation in knowledge representation within autoencoder-based pre-trained language models (PLMs), a phenomenon termed \textit{representation anisotropy} \cite{ethayarajh2019contextual}. This issue is characterized by the clustering of learned token and sentence embeddings within a constrained area of the representation space, leading to a lack of distributional uniformity \cite{li2020sentence}. While previous efforts to address representation anisotropy have largely concentrated on promoting uniformity among token representations, they often overlook the critical alignment of similar sentence representations \cite{su2021whitening,li2020sentence,su2022contrastive}. More recent works advocate for integrating KG triples and using knowledge graph embedding losses to fine-tune PLMs, aiming to bolster their knowledge representation abilities \cite{shen2022joint,wang2022language}. Nonetheless, such approaches may limit themselves to optimizing at the token level or reduce the model to a mere text encoder, thereby diminishing its inherent generative capabilities.


Conversely, implicit knowledge alignment leverages the pre-training or fine-tuning of language models with external knowledge sources, employing the vanilla language modeling objective or its variations. This approach predominantly preserves the next token prediction framework, essentially retaining the native text generation prowess of LLMs. In the realm of implicit knowledge alignment, the prevalent practice involves the fine-tuning of LLMs with KG triples and their textual descriptions, as opposed to directly altering the hidden knowledge representations \cite{chen2022knowledge,yao2023exploring}. Nevertheless, the efficacy of these methods on knowledge graph completion tasks remains substantially inferior when compared to strategies that directly fine-tune knowledge representations \cite{wang2022language,wang2022simkgc}. Intriguing findings from \cite{fu2023revisiting} reveal that fine-tuning PLMs with randomly unaligned KG triples can achieve performance on par with that obtained through fine-tuning with aligned triples in various tasks, including named entity recognition and relation classification. Their findings suggest that the hidden states of entities, whether infused with aligned or random knowledge, exhibit remarkable similarity. Consequently, existing implicit alignment methods fail to effectively utilize the injected knowledge or accurately discern the connection between newly introduced knowledge and the model's inherent knowledge, culminating in suboptimal performance.

In this paper, we propose \textbf{KaLM}, a \textit{Knowledge-aligned Language Modeling} approach for aligning LLMs with KG knowledge. Specifically, we use KG triples and their textual descriptions to fine-tune LLMs via the joint objective of \textit{explicit knowledge alignment} and \textit{implicit knowledge alignment.} 

The explicit knowledge alignment objective aims to directly optimize the hidden representations of knowledge in LLMs through \textit{dual-view knowledge graph contrastive learning}. 
We theoretically prove and empirically show that this objective can facilitate knowledge representation alignment and alleviate representation anisotropy. 
For KG triples, we consider tail entity description and the concatenation of head entity description and relation description as two distinct views of the same knowledge. 
\textit{The key insight is that: (1) representations of two different views of the same knowledge (i.e., from the same triple) should be pulled together, while (2) representations of different knowledge (i.e., from different triples) should be pushed apart.} 
The first term encourages semantically similar knowledge to remain close in the representation space, promoting knowledge representation alignment. 
The second term forces dissimilar knowledge to be as far apart as possible in the vector space, improving knowledge representation uniformity and mitigating representation anisotropy. 
As shown in Figure~\ref{fig-intro-b}, our method can obtain the ideal knowledge representations that are both aligned and uniform. 

The implicit knowledge alignment objective focuses on incorporating textual patterns of knowledge into LLMs through \textit{triple completion language modeling}, which can maintain the generative capability of LLMs and boost performance on knowledge inference tasks. 
We constructed a triple completion dataset based on the KG triples to fine-tune LLMs, improving their instruction-following ability and facilitating implicit knowledge alignment. 
We also show the implicit knowledge alignment objective can further boost knowledge representation performance. 
This confirms that both explicit alignment and implicit alignment are crucial for knowledge alignment, as they both essentially require a deep understanding of knowledge.

Our contributions are summarized as follows: 
\begin{itemize}
    \item We introduce \textbf{KaLM}, a \textit{knowledge-aligned language modeling} approach that aligns autoregressive LLMs with KG knowledge via the joint objective of \textit{explicit knowledge alignment} and \textit{implicit knowledge alignment.} 
    \item We \textit{theoretically prove and empirically demonstrate} that the explicit knowledge alignment objective achieved through dual-view knowledge graph contrastive learning can facilitate knowledge representation alignment and alleviate the issue of representation anisotropy. 
    \item The experimental results on knowledge-driven tasks demonstrate the effectiveness of \textit{KaLM}. In the embedding-based KGC task, KaLM significantly improves Mean Rank and Hit@10 metrics compared to previous state-of-the-art methods. In the generation-based KGQA task, KaLM achieves a notable improvement in answering accuracy compared to the base LLM. 
\end{itemize}

\section{Related Work}
\label{sec:related}
Our work is closely related to Knowledge Enhancement for LLMs and Representation Anisotropy of Language Models. 
A more detailed review of related work can be found in Appendix \ref{sec:appendix-related}.

\noindent \textbf{Knowledge Enhancement for LLMs} 
Knowledge enhancement aims to incorporate factual and domain-specific knowledge into LLMs to address their knowledge deficiencies. This can be divided into retrieval-based augmentation and training-based integration. 
\textit{Retrieval-based knowledge augmentation} methods leverage external retrieval modules to provide additional knowledge, aiming to improve the knowledge reasoning capability of LLMs \cite{sun2023think,jiang2023structgpt}. 
However, this approach may lead to knowledge conflicts \cite{feng2023trends}, where knowledge in LLMs and knowledge in the retrieved documents are inconsistent or the retrieved multiple documents are contradictory. 
\textit{Training-based knowledge integration} methods involve using KG triple descriptions to pre-train or fine-tune LLMs, aiming to achieve knowledge alignment. 
These methods can be divided into explicit alignment \cite{wang2021kepler,yasunaga2022deep} and implicit alignment \cite{yao2023exploring,zhang2023making} based on whether they directly optimize the knowledge representation. 
Nevertheless, prior methods have either sacrificed the generative capability or lacked effective representation alignment. 
Our approach enhances the knowledge of LLMs via a unique joint objective of explicit alignment and implicit alignment, improving the quality of knowledge representations and generative knowledge reasoning capabilities.

\noindent \textbf{Representation Anisotropy of Language Models} 
PLMs have long been plagued by representation anisotropy \cite{ethayarajh2019contextual}, where the learned token and sentence embeddings are confined to a narrow cone within the entire representation space. 
The issue of representation anisotropy not only results in model degradation \cite{su2022contrastive} but also leads to poor performance on discriminative tasks. 
Previous work on alleviating representation anisotropy has mainly focused on post-processing techniques such as normalizing flows \cite{li2020sentence} or whitening operations \cite{su2021whitening}. 
\citet{su2022contrastive} propose a contrastive training objective to encourage learning isotropic token representations. 
However, these methods mainly improve the isotropy of token representations without enhancing the discriminability of sentence representations. 
Our method improves the token-level and sentence-level representation anisotropy of LLMs through dual-view knowledge graph contrastive learning, and it has rigorous theoretical guarantees.

\section{Knowledge-aligned Autoregressive Language Modeling}
\label{sec:method}

In this section, we introduce \textbf{KaLM}, a \textit{Knowledge-aligned Language Modeling} approach for aligning LLMs with KG knowledge via the joint objective of \textit{explicit knowledge alignment} and \textit{implicit knowledge alignment}. The overview is shown in Figure~\ref{method-overview}.

\begin{figure*}[ht]
\centering 
\includegraphics[width=\textwidth]{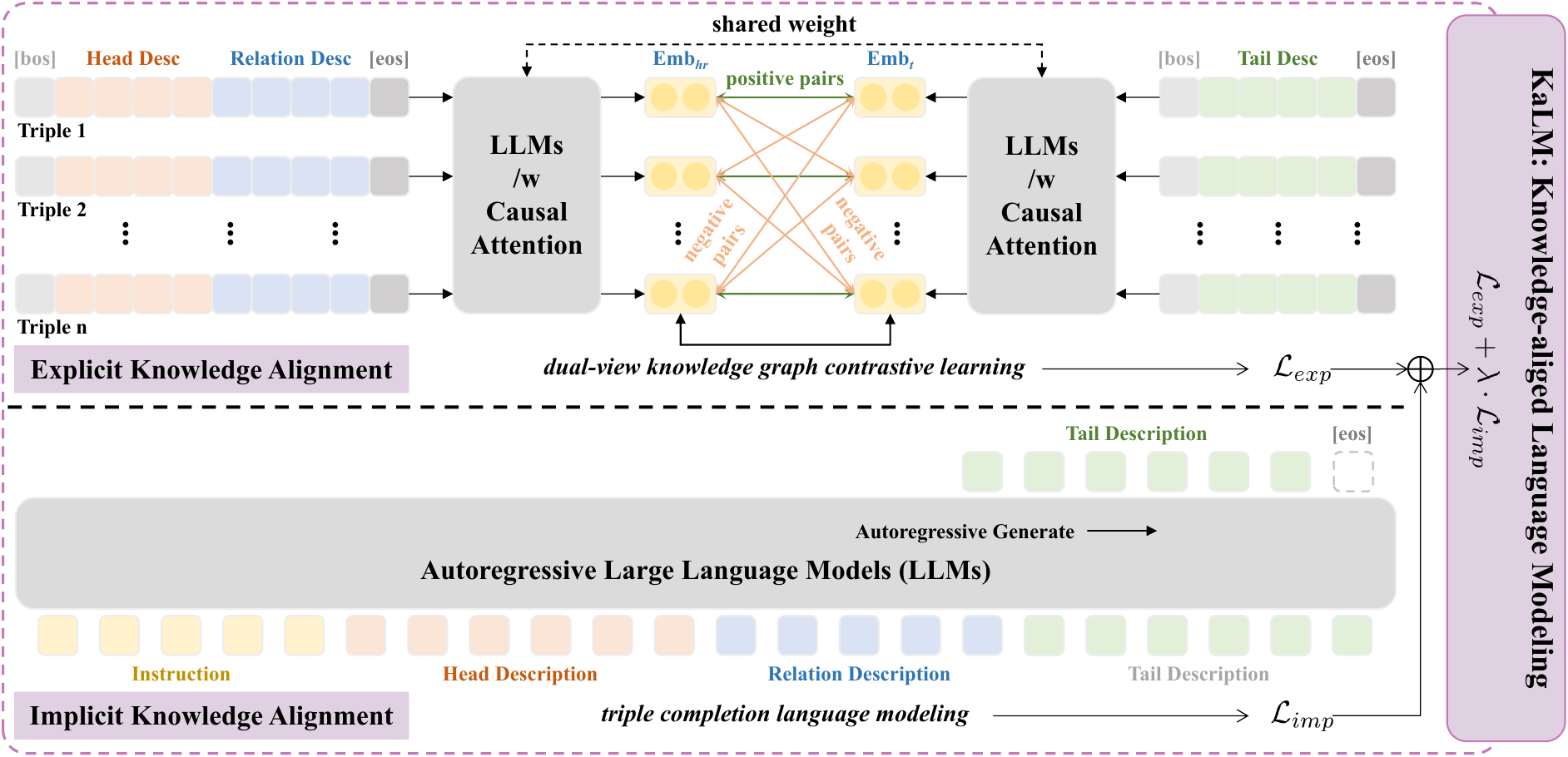}
\caption{The overall framework of \textbf{KaLM}. \textbf{Up}: The explicit knowledge alignment objective ($\mathcal{L}_{exp}$) aims to directly optimize the knowledge representation of LLMs via dual-view knowledge graph contrastive learning. \textbf{Down}: The implicit knowledge alignment objective ($\mathcal{L}_{imp}$) focuses on incorporating textual patterns of knowledge into LLMs via triple completion language modeling. The final training objective is the weighted average of $\mathcal{L}_{exp}$ and $\mathcal{L}_{imp}$.}
\label{method-overview}
\end{figure*} 

\subsection{Notations and Preliminaries}
A KG $\mathcal{G}$ stores factual knowledge, denoted as $\mathcal{G} = (\mathcal{E}, \mathcal{R}, \mathcal{T}, \mathcal{D})$. $\mathcal{E}$ and $\mathcal{R}$ are the set of entities and relations, respectively. $\mathcal{D}$ is the description set of all entities and relations. $\mathcal{D}_e$ and $\mathcal{D}_r$ are the textual description of entity $e$ and relation $r$, respectively. $\mathcal{T} = \{ (h, r, t) | h, t \in \mathcal{E}, r \in \mathcal{R} \}$ is the triple set. A triple $(h, r, t)$ depicts the fact that there is a relation $r$ between the head entity $h$ and the tail entity $t$.

\subsection{Explicit Knowledge Alignment}
For KG triples, the textual description of the tail entity and the concatenation of the textual descriptions of the head entity and relation can be seen as two distinct views of the same knowledge. 
This inspires \textit{KaLM} to align representations of two distinct views of the same knowledge (i.e., from the same triple), while separating representations of different knowledge (i.e., from different triples). 

The LLM, denoted as $E_{LLM}$, is fine-tuned with the \textit{dual-view knowledge graph contrastive learning} loss. The training corpus contains paired textual descriptions, $\{ (\mathcal{D}_{hr}, \mathcal{D}_t) \}_{i=1}^N$, where $\mathcal{D}_t$ is the tail entity description, and $\mathcal{D}_{hr}$ is the concatenation of the head entity description and relation description. 
Given a training pair $(\mathcal{D}_{hr}, \mathcal{D}_t)$, the same $E_{LLM}$ is used to compute the embeddings of $\mathcal{D}_{hr}$ and $\mathcal{D}_t$ independently. Moreover, we prepend the $\texttt{[bos]}$ token to the beginning and append the $\texttt{[eos]}$ token to the end of the textual description. 
The augmented input is fed into $E_{LLM}$, and the hidden representation corresponding to the $\texttt{[eos]}$ token from the last layer is used as the final embedding of the input. 
\begin{align*}
    e_{hr} = E_{LLM} (\texttt{[bos]}_{hr} \oplus \mathcal{D}_{hr} \oplus \texttt{[eos]}_{hr}), \\
    e_t = E_{LLM} (\texttt{[bos]}_t \oplus \mathcal{D}_t \oplus \texttt{[eos]}_t), 
\end{align*}
where $\oplus$ is the operation to concatenate two strings and $\mathcal{D}_{hr} = \mathcal{D}_h \oplus \mathcal{D}_r$. 
For stable training, we adopt ``['' as $\texttt{[bos]}_{hr}$ and ``]'' as $\texttt{[eos]}_{hr}$, while using ``\{'' as $\texttt{[bos]}_t$ and ``\}'' as $\texttt{[eos]}_t$. 

We utilize the knowledge graph contrastive learning loss to directly optimize the knowledge representation of the LLM by \textit{encouraging semantically similar knowledge to stay close in the representation space and pushing dissimilar knowledge to be far apart in the representation space}. More specifically, we apply the InfoNCE loss with an additive margin over the in-batch negatives to fine-tune the model. 
The row-direction loss $\ell_r$ is as follows for a given positive pair, and the column-direction loss $\ell_c$ is defined similarly (see Appendix~\ref{section-c2}). 
\begin{align}
    \label{eq-row-cl}
    \ell_r = - \log \frac{e^{(\phi(e_{hr}, e_t)-\gamma)/\tau}}{e^{(\phi(e_{hr}, e_t)-\gamma)/\tau} + \sum\nolimits_{i=1}^{\mathcal{N}} e^{\phi(e_{hr}, e_{t_i^{\prime}})/\tau}},
\end{align}
where $\mathcal{N}$ is the negative batch size, $\tau$ is the trainable temperature that controls the strength of penalties on hard negative samples, $\phi$ is the cosine similarity function that measures the plausibility of a triple, and $\gamma$ is the additive margin that encourages increasing the similarity score of positive pairs. 

The training objective for \textbf{exp}licit knowledge alignment is the sum of the $\ell_r$ and the $\ell_c$ losses: 
\begin{equation}
    \label{eq:l_rep}
    \mathcal{L}_{exp} = \frac{1}{\mathcal{N}} \underset{(\mathcal{ D}_{hr}, \mathcal{D}_t)}\sum (\ell_r + \ell_c) / 2.
\end{equation}

\subsection{Implicit Knowledge Alignment}
The implicit knowledge alignment objective focuses on incorporating textual patterns of knowledge into the LLM to prevent catastrophic forgetting of previous knowledge and maintain its generative capability. We constructed an instruction-tuning dataset based on the KG triple descriptions to fine-tune the model through \textit{triple completion language modeling}. 
We also show that the implicit knowledge alignment objective can bring performance boosts on knowledge representation evaluations. 
This indicates that explicit alignment and implicit alignment are both imperative for effective knowledge alignment, as they both essentially necessitate a profound understanding of knowledge. 

We follow the recipe of Stanford Alpaca \cite{alpaca} and use the provided template to construct the instruction-tuning dataset. The instruction passed to the template, abbreviated as $\texttt{inst}$, is: \textit{``Given the head entity and relation, write a tail entity that completes the triple''.} 
The input and output are $\mathcal{D}_{hr}$ and $\mathcal{D}_t$, respectively. 
The training objective for \textbf{imp}licit knowledge alignment is: 
\begin{equation}
    \label{eq:l_pat}
    \mathcal{L}_{imp} = \frac{1}{\mathcal{M}} \underset{(\mathcal{D}_{hr}, \mathcal{D}_t)}\sum - \log P(\mathcal{D}_t | \texttt{inst}, \mathcal{D}_{hr}),
\end{equation}
where $\mathcal{M}$ is the instruction-tuning batch size.

\subsection{Knowledge-aligned Language Modeling}
The ultimate training objective of our proposed \textbf{KaLM} is the weighted average of $\mathcal{L}_{exp}$ and $\mathcal{L}_{imp}$: 
\begin{equation}
\label{eq:final-obj}
    \mathcal{L}_{KaLM} = \mathcal{L}_{exp} + \lambda \cdot \mathcal{L}_{imp},
\end{equation}
where $\lambda$ is a hyperparameter that adjusts the relative weight between them. 
Notably, this formulation allows us to use different batch sizes for explicit knowledge alignment ($\mathcal{N}$) and implicit knowledge alignment ($\mathcal{M}$). Previous work has shown that a sufficiently large batch size is key to the success of contrastive representation learning \cite{chen2020simple}. With Equation~\ref{eq:final-obj}, we can significantly increase the explicit knowledge alignment batch size while keeping the implicit knowledge alignment batch size fixed to save computational resources.

\section{Theoretical Analysis}
\label{sec:theory}

We theoretically prove that the explicit knowledge alignment objective implemented through dual-view knowledge graph contrastive learning can facilitate knowledge representation alignment and alleviate the issue of representation anisotropy. 

\subsection{Dual-view Contrastive Learning for Knowledge Representation Alignment}
\label{sec:theory-1}
The outstanding performance of contrastive representation learning has attracted researchers to analyze its underlying reasons for success from a theoretical perspective. \citet{wang2020understanding} identify alignment and uniformity as two key properties of contrastive learning and propose two quantifiable metrics to measure the quality of representations. 

We concentrate on understanding the dual-view knowledge graph contrastive learning loss from the knowledge alignment and uniformity perspective. To simplify the notation, we use $f$ to denote $E_{LLM}$. 

\textit{Alignment} computes the expected distance between positive pairs and encourages the learned representations for positive pairs to be similar. 
\textit{Uniformity} evaluates the even distribution of representations and encourages the separation of features from randomly selected negative samples.

\begin{equation*}
    \ell_{\texttt{align}}(f; \alpha) \triangleq \underset{(\mathcal{D}_{hr}, \mathcal{D}_t)\sim p_{\texttt{pos}}}{\mathbb{E}} \left[ \Vert f(\mathcal{D}_{hr}) - f(\mathcal{D}_t) \Vert_2^{\alpha} \right] ,
\end{equation*}

\begin{equation*}
    \ell_{\texttt{uniform}}(f; t) \triangleq\log \underset{\mathcal{D}_i, \mathcal{D}_j\stackrel{i.i.d.}{\sim} p_{\texttt{data}}}{\mathbb{E}} \left[ e^{-t\Vert f(\mathcal{D}_i)-f(\mathcal{D}_j) \Vert_2^2} \right],
\end{equation*}
where $p_{\texttt{pos}}$ denotes the distribution of positive pairs $\{ (\mathcal{D}_{hr}, \mathcal{D}_t) \}_{i=1}^N$ and $p_{\texttt{data}}$ represents the data distribution of textual descriptions $\{\mathcal{D}_i\}_{i=1}^N$. 

Since the learned knowledge representations are L2-normalized, we have $\phi(e_{hr}, e_t) = f(x)^{\top} f(y)$. The additive margin $\gamma$ encourages the model to learn more robust features without affecting the asymptotic analysis, thus we ignore it. For ease of analysis, we reformulate the contrastive learning objective of Equation~\ref{eq-row-cl} and \ref{eq:l_rep} as follows: 
\begin{equation}
\begin{split}
    \label{eq:exp-reform}
    & \mathcal{L}_{\texttt{exp}}(f; \tau, \mathcal{N}) \triangleq 
    \underset{\substack{(\mathcal{D}_{hr}, \mathcal{D}_t) \sim p_{\texttt{pos}} \\ {\{\mathcal{D}_t{_i^{\prime}}\}_{i=1}^\mathcal{N}} \stackrel{i.i.d.}{\sim} p_{\texttt{data}}}} {\mathbb{E}} \\
    & \left[ - \log \frac{e^{f(\mathcal{D}_{hr})^{\top}f(\mathcal{D}_t)/\tau}}{e^{f(\mathcal{D}_{hr})^{\top}f(\mathcal{D}_t)/\tau} + \sum\limits_{i=1}^\mathcal{N} e^{f(\mathcal{D}_{hr})^{\top}f(\mathcal{D}_t{_i^{\prime}})/\tau}} \right],
\end{split}
\end{equation}


Following \citet{wang2020understanding}, we analyze the asymptotics of the objective in Equation~\ref{eq:exp-reform}.

\begin{theorem}[Asymptotics of $\mathcal{L}_{\texttt{exp}}$]
    \label{theorem-1}
    For temperature $\tau > 0$, as the number of negative samples $\mathcal{N} \rightarrow \infty$, the normalized dual-view knowledge graph contrastive loss in Equation~\ref{eq:exp-reform} converges to
    \begin{equation}
    \label{eq-asymptotics}
    \begin{split}
        & \lim_{\mathcal{N} \rightarrow \infty} \mathcal{L}_{\texttt{exp}}(f; \tau, \mathcal{N}) - \log \mathcal{N} = \\
        & \qquad -\frac{1}{\tau} \underset{(\mathcal{D}_{hr}, \mathcal{D}_t) \sim p_{\texttt{pos}}}{\mathbb{E}} \left[ f(\mathcal{D}_{hr})^{\top}f(\mathcal{D}_t) \right] \\
        & \qquad + \underset{\mathcal{D}_i \sim p_{data}}{\mathbb{E}}\left[\log\underset{\mathcal{D}_i^- \sim p_{\texttt{data}}}{\mathbb{E}}\left[e^{f(\mathcal{D}_i^-)^\top f(\mathcal{D}_i)/\tau}\right]\right].
    \end{split}
    \end{equation}
    We have the following conclusions:
    \begin{enumerate}
        \item By pulling together the representations of two different views of the same knowledge, the first term of Equation~\ref{eq-asymptotics} is minimized, and the encoder $E_{LLM}$ is perfectly knowledge-aligned. 
        \item Assuming the perfect uniform knowledge encoder $E_{LLM}$ exists, it precisely minimizes the second term of Equation~\ref{eq-asymptotics} by pushing away the representations of different knowledge. 
    \end{enumerate}
\end{theorem}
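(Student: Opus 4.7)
The plan is to follow the asymptotic analysis template of Wang and Isola (2020), adapted to the dual-view setting where the two ``views'' are $\mathcal{D}_{hr}$ and $\mathcal{D}_t$ rather than two augmentations of the same input. The starting point is the reformulated loss in Equation~\ref{eq:exp-reform}. First I would split the logarithm of the softmax into the positive-pair logit and the log-partition term, yielding
\begin{equation*}
\mathcal{L}_{\texttt{exp}}(f;\tau,\mathcal{N}) = -\tfrac{1}{\tau}\,\mathbb{E}_{p_{\texttt{pos}}}\!\left[f(\mathcal{D}_{hr})^{\top} f(\mathcal{D}_t)\right] + \mathbb{E}\!\left[\log\!\left(e^{f(\mathcal{D}_{hr})^{\top}f(\mathcal{D}_t)/\tau} + \sum_{i=1}^{\mathcal{N}} e^{f(\mathcal{D}_{hr})^{\top} f(\mathcal{D}_{t_i'})/\tau}\right)\right].
\end{equation*}
The first term is already in the desired alignment form and does not depend on $\mathcal{N}$, so it can be carried through the limit untouched.

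Next I would handle the log-partition term. Subtracting $\log\mathcal{N}$ turns the sum inside the log into an average plus a vanishing contribution from the positive logit: specifically, one writes $\log(\cdot) - \log\mathcal{N} = \log\!\bigl(\tfrac{1}{\mathcal{N}} e^{f(\mathcal{D}_{hr})^{\top}f(\mathcal{D}_t)/\tau} + \tfrac{1}{\mathcal{N}}\sum_{i=1}^{\mathcal{N}} e^{f(\mathcal{D}_{hr})^{\top} f(\mathcal{D}_{t_i'})/\tau}\bigr)$. Because the representations are $L_2$-normalized, every logit lies in $[-1/\tau, 1/\tau]$, so the summands are uniformly bounded in $[e^{-1/\tau}, e^{1/\tau}]$. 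This is exactly the regularity needed to invoke the strong law of large numbers on the i.i.d.\ draws $\{\mathcal{D}_{t_i'}\}$ (conditional on $\mathcal{D}_{hr}$), giving almost-sure convergence of the empirical average to $\mathbb{E}_{\mathcal{D}^-\sim p_{\texttt{data}}}[e^{f(\mathcal{D}_{hr})^{\top} f(\mathcal{D}^-)/\tau}]$, while the $\tfrac{1}{\mathcal{N}} e^{f(\mathcal{D}_{hr})^{\top}f(\mathcal{D}_t)/\tau}$ term vanishes. Continuity of $\log$ on $(0,\infty)$ plus the uniform boundedness then allows me to apply the bounded convergence theorem to move the limit inside the outer expectation, which produces exactly the second term of Equation~\ref{eq-asymptotics} after renaming the ``anchor'' variable from $\mathcal{D}_{hr}$ to a generic $\mathcal{D}_i \sim p_{\texttt{data}}$ (justified because both views in the dual-view setting share the same marginal representation distribution up to the encoder, and the argument of the log depends on the anchor only through its embedding).

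For the two conclusions, I would argue as follows. The first term $-\tfrac{1}{\tau}\mathbb{E}_{p_{\texttt{pos}}}[f(\mathcal{D}_{hr})^{\top}f(\mathcal{D}_t)]$ is bounded below by $-1/\tau$ on the unit sphere, with equality iff $f(\mathcal{D}_{hr}) = f(\mathcal{D}_t)$ almost surely on $p_{\texttt{pos}}$; this is precisely the notion of perfect knowledge alignment used to define $\ell_{\texttt{align}}$. For the second term, Jensen's inequality gives $\mathbb{E}_{\mathcal{D}_i}[\log \mathbb{E}_{\mathcal{D}_i^-}[e^{f(\mathcal{D}_i^-)^\top f(\mathcal{D}_i)/\tau}]] \geq \log \mathbb{E}_{\mathcal{D}_i,\mathcal{D}_i^-}[e^{f(\mathcal{D}_i^-)^\top f(\mathcal{D}_i)/\tau}]$, and when a perfectly uniform encoder exists (i.e.\ $f_\sharp p_{\texttt{data}}$ is the uniform distribution on the hypersphere) the inner expectation is independent of the anchor, so Jensen is tight and the term is minimized, matching $\ell_{\texttt{uniform}}$ up to an additive constant. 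Combining the two conclusions shows the limit decomposes into an alignment penalty and a uniformity penalty, exactly as claimed.

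The main obstacle will be the rigorous interchange of limit and expectation in the log-partition term: one must be careful that the positive-pair logit inside the partition does not spoil the law-of-large-numbers step, and that $\log$ of a random average converges to $\log$ of the limiting expectation. Both issues are resolved by the uniform two-sided boundedness of the exponentiated cosine similarities on the sphere, so invoking the bounded (or dominated) convergence theorem together with continuous mapping closes the argument cleanly. Everything else, including the tightness of Jensen under perfect uniformity, is a standard consequence of representations living on $S^{d-1}$.
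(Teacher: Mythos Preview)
Your proposal is correct and follows essentially the same route as the paper: split the log-softmax into the positive logit and the log-partition, subtract $\log\mathcal{N}$, apply the strong law of large numbers to the empirical average of negatives, and pass the limit through the outer expectation. If anything, you are more careful than the paper's proof---you explicitly invoke boundedness of the normalized logits to justify bounded convergence and the continuous mapping through $\log$, whereas the paper simply asserts the interchange and appeals to ``symmetry of $p$'' for the anchor-relabeling step you also flag.
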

\begin{proof}
    See Appendix~\ref{sec:theory-1-proof}.
\end{proof}

\subsection{Alleviation of Representation Anisotropy}
\label{sec:theory-2}
We then prove that the dual-view knowledge graph contrastive learning objective can directly alleviate representation anisotropy and improve the discriminability of knowledge representations. 

Let $\mathbf{E}$ be the sentence embedding matrix of $\{\mathcal{D}_i\}_{i=1}^N$, where the $i$-th row of $\mathbf{E}$ is $e_i$. 
Following \citet{ethayarajh2019contextual}, the sentence-level representation anisotropy value of $\{\mathcal{D}_i\}_{i=1}^N$ is defined as: 
\begin{equation}
\label{eq-anisotropy}
    \texttt{anisotropy}_{\{\mathcal{D}\}} = \frac{1}{N(N-1)} \sum_{i=1}^N \sum_{j=1, j \neq i}^N e_i^\top e_j .
\end{equation}

We can further derive the following theorem. 

\begin{theorem}[Alleviation of Anisotropy]
\label{theorem-2}
    When $p_{data}$ is uniform over finite samples $\{\mathcal{D}_i\}_{i=1}^N$, the second term of Equation~\ref{eq-asymptotics} is the upper bound of the sentence-level anisotropy of $\{\mathcal{D}_i\}_{i=1}^N$, i.e.,
    \begin{equation}
    \label{eq-finite}
    \begin{split}
        & \underset{\mathcal{D}_i \sim p_{data}}{\mathbb{E}}\left[\log\underset{\mathcal{D}_i^-\sim p_{data}}{\mathbb{E}}\left[e^{f(\mathcal{D}_i^-)^\top f(\mathcal{D}_i)/\tau}\right]\right] \\ 
        & \quad \geq \frac{N-1}{\tau N} \cdot \texttt{anisotropy}_{\{\mathcal{D}\}} + \frac{1}{\tau N} .
    \end{split}
    \end{equation}
    We have the following result:
    By optimizing the second term of Equation~\ref{eq-asymptotics}, we essentially minimize the upper bound of the sentence-level anisotropy of corpus $\{\mathcal{D}_i\}_{i=1}^N$, thereby directly alleviating the representation anisotropy problem. 
\end{theorem}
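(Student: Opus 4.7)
The plan is to show the inequality by combining three ingredients: rewriting the expectations as finite averages under the uniform-sample assumption, applying Jensen's inequality to bring the logarithm inside the inner expectation, and then using the L2-normalization of the embeddings to separate diagonal from off-diagonal terms and recognize the anisotropy quantity of Equation~\ref{eq-anisotropy}.

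First I would unfold the left-hand side of Equation~\ref{eq-finite}. Since $p_{\texttt{data}}$ is the uniform distribution on $\{\mathcal{D}_i\}_{i=1}^N$ and $e_i \triangleq f(\mathcal{D}_i)$, both outer and inner expectations become arithmetic means, turning the LHS into
\begin{equation*}
\frac{1}{N}\sum_{i=1}^{N}\log\!\left(\frac{1}{N}\sum_{j=1}^{N} e^{e_j^{\top} e_i/\tau}\right).
\end{equation*}

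Next I would invoke Jensen's inequality. Because $\log$ is concave, for each fixed $i$ we have
\begin{equation*}
\log\!\left(\frac{1}{N}\sum_{j=1}^{N} e^{e_j^{\top} e_i/\tau}\right) \;\ge\; \frac{1}{N}\sum_{j=1}^{N} \frac{e_j^{\top} e_i}{\tau}.
\end{equation*}
Averaging over $i$ and exchanging the sums gives a lower bound of the form $\frac{1}{\tau N^{2}}\sum_{i=1}^{N}\sum_{j=1}^{N} e_i^{\top} e_j$.

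Finally I would decompose this double sum into its diagonal and off-diagonal contributions. Because the embeddings are L2-normalized, $e_i^{\top} e_i = 1$, so the diagonal contributes exactly $N$. The remaining $N(N-1)$ off-diagonal terms are, by definition in Equation~\ref{eq-anisotropy}, $N(N-1)\cdot \texttt{anisotropy}_{\{\mathcal{D}\}}$. Dividing by $\tau N^{2}$ produces $\tfrac{1}{\tau N} + \tfrac{N-1}{\tau N}\,\texttt{anisotropy}_{\{\mathcal{D}\}}$, which is exactly the RHS of Equation~\ref{eq-finite}. The qualitative conclusion then follows: minimizing the second term of Equation~\ref{eq-asymptotics} forces its Jensen lower bound down, and hence drives $\texttt{anisotropy}_{\{\mathcal{D}\}}$ toward its minimum, alleviating representation anisotropy.

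The main obstacle is essentially bookkeeping rather than a deep step: one must be careful that Jensen is applied in the direction that yields a \emph{lower} bound (exploiting concavity of $\log$), and that the L2-normalization assumption used in Section~\ref{sec:theory-1} is carried over so that the diagonal of the Gram matrix is exactly $N$; without normalization the constant $\tfrac{1}{\tau N}$ in the bound would not appear cleanly.
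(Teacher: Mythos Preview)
Your proposal is correct and follows essentially the same route as the paper's own proof: rewrite both expectations as uniform averages, apply Jensen's inequality via the concavity of $\log$ to obtain the lower bound $\tfrac{1}{\tau N^2}\sum_{i,j} e_i^\top e_j$, then split off the diagonal using L2-normalization and identify the off-diagonal part with the anisotropy of Equation~\ref{eq-anisotropy}. Your explicit remarks about the direction of Jensen and the role of normalization are exactly the two places where care is needed.
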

\begin{proof}
    See Appendix~\ref{sec:theory-2-proof}.
\end{proof}

\section{Experiments}
\label{sec:experiments}
In this section, we assess the effectiveness of KaLM in knowledge alignment. 
The experimental setup is outlined in \ref{exp:setup}. 
In \ref{exp:kgc} and \ref{exp:kgqa}, we present results on knowledge graph completion (KGC) and knowledge graph question answering (KGQA). 
In \ref{exp:case}, we provide further analysis of knowledge representation and present case studies of KGQA generations.

\begin{table*}[t]
\centering
\caption{Embedding-based KGC results on WN18RR and FB15k-237. Baseline results are from their papers, with ``-'' indicating a missing result. The best and second-best results are marked by \textbf{bold} and \underline{underline}, respectively.}
{\begin{tabular}{l|ccccc|ccccc}
\hline
\multirow{2}{*}{\bf Method} & \multicolumn{5}{c|}{\bf WN18RR}      & \multicolumn{5}{c}{\bf FB15k-237}  \\ \cline{2-11} & \bf MR & \bf MRR & \bf H@1 & \bf H@3 & \bf H@10 & \bf MR & \bf MRR & \bf H@1 & \bf H@3 & \bf H@10 \\ \hline
\multicolumn{11}{l}{\textit{structure-based methods}}         \\ \hline
TransE & 2300 & 0.243 & 0.043  & 0.441 & 0.532 & 323 & 0.279 & 0.198 & 0.376 & 0.441 \\
DistMult & 7000 & 0.444 & 0.412 & 0.470 & 0.504  & 512 &  0.281 & 0.199 & 0.301 & 0.446 \\
RotatE & 3340 & 0.476 & 0.428 & 0.492 & 0.571 & 177 & 0.338 & 0.241 & 0.375 & 0.533 \\ \hline
\multicolumn{11}{l}{\textit{description-based methods (autoencoder PLMs)}}         \\ \hline
StAR & 51 & 0.401 & 0.243 & 0.491 & 0.709 & 117 & 0.296 & 0.205 & 0.322 & 0.482 \\ 
C-LMKE & 72 & 0.598 & 0.480 & 0.675 & 0.806 & 183 & \textbf{0.404} & \textbf{0.324} & \textbf{0.439} & \textbf{0.556} \\
SimKGC & - & \textbf{0.671} & \textbf{0.587} & \textbf{0.731} & 0.817 & - & \underline{0.333} & \underline{0.246} & \underline{0.362} & 0.510 \\ \hline
\multicolumn{11}{l}{\textit{description-based methods (autoregressive LLMs)}}         \\ \hline
Llama-2-7B  & 15969 & 0.010 & 0.004 & 0.010 &0.020 & 5359 & 0.006 & 0.002 & 0.004  & 0.012 \\
\textbf{Llama2-7B$_{KaLM}$}  & \textbf{19} & 0.556 & 0.409 & 0.656 & 0.851 & \textbf{114} & 0.299 & 0.204 & 0.325  & 0.502 \\ 
\textbf{Llama3-8B$_{KaLM}$}  & 23 & 0.588 & 0.446 & 0.676 & \underline{0.860} & 121 & 0.308 & 0.212 & 0.337  & 0.509 \\ 
\textbf{Mistral-7B$_{KaLM}$}  & \underline{20} & \underline{0.612} & \underline{0.484} & \underline{0.702} & \textbf{0.869} & \underline{116} & 0.317 & 0.225 & 0.351  & \underline{0.518} \\
\hline
\end{tabular}}
\label{kgc-results}
\end{table*}

\subsection{Experimental Setup}
\label{exp:setup}
\textbf{Datasets.} 
We use WN18RR \cite{dettmers2018convolutional} and FB15k-237 \cite{toutanova2015observed} as the KGs for knowledge alignment training. WN18RR and FB15k-237 are derived from WordNet and Freebase, respectively \cite{bordes2013translating}. We use the information provided by KG-BERT \cite{yao2019kg} for textual descriptions. Following \citet{wang2022simkgc}, we add an inverse triple $(t, r^{-1}, h)$ for each triple $(h, r, t)$ in the triple set, where $r^{-1}$ is the inverse relation of the original relation $r$. 

\noindent \textbf{Model Training.} 
We choose Llama-2-7B, Llama-3-8B, and Mistral-7B as base LLMs and fine-tune them through the joint objective of explicit knowledge alignment and implicit knowledge alignment. 
To save computational resources for parameter-efficient fine-tuning, we use LoRA \cite{hu2021lora} to fine-tune the feed-forward network of the model. 

\noindent \textbf{Evaluation Details.} 
Experiments mainly focus on two aspects: knowledge representation assessment and knowledge inference evaluation. 
For \textit{knowledge representation assessment}, we evaluate the embedding-based KGC task and illustrate the alleviation of representation anisotropy. We report five automated metrics: Mean Rank (MR), Mean Reciprocal Rank (MRR), and Hit@$k$ ($k \in \{1, 3, 10\}$). We compare KaLM with structure- and description-based methods. Structured-based methods include TransE \cite{bordes2013translating}, DistMult \cite{yang2015embedding}, and RotatE \cite{sun2018rotate}. Description-based methods include StAR \cite{wang2021structure}, C-LMKE \cite{wang2022language}, and SimKGC \cite{wang2022simkgc}. 
For \textit{knowledge inference evaluation}, we evaluate the generation-based KGQA task and analyze the PPL metric and MMLU score \cite{hendrycks2020measuring}. 
We report the prediction accuracy over entities, relations, and triples. 
We also provide case studies of KGQA generations. 

Additional experimental results and detailed ablation studies can be found in Appendix~\ref{sec:appendix-results} and \ref{sec:ablations}.

\begin{figure}[t]
\centering 
\includegraphics[width=\linewidth]{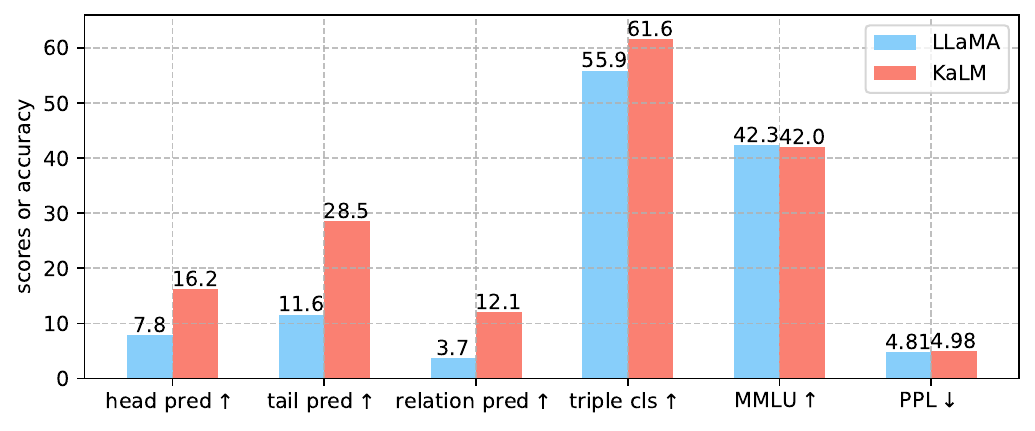}
\caption{Comparison of generative knowledge inference performance between Llama-2-7B and KaLM. $\uparrow$ means higher is better and $\downarrow$ means lower is better.}
\label{kgqa-results}
\end{figure}

\begin{figure}[t]
\centering
\subfigure[LLaMA]{
    \begin{minipage}[b]{0.45\linewidth}
    \includegraphics[width=1\linewidth]{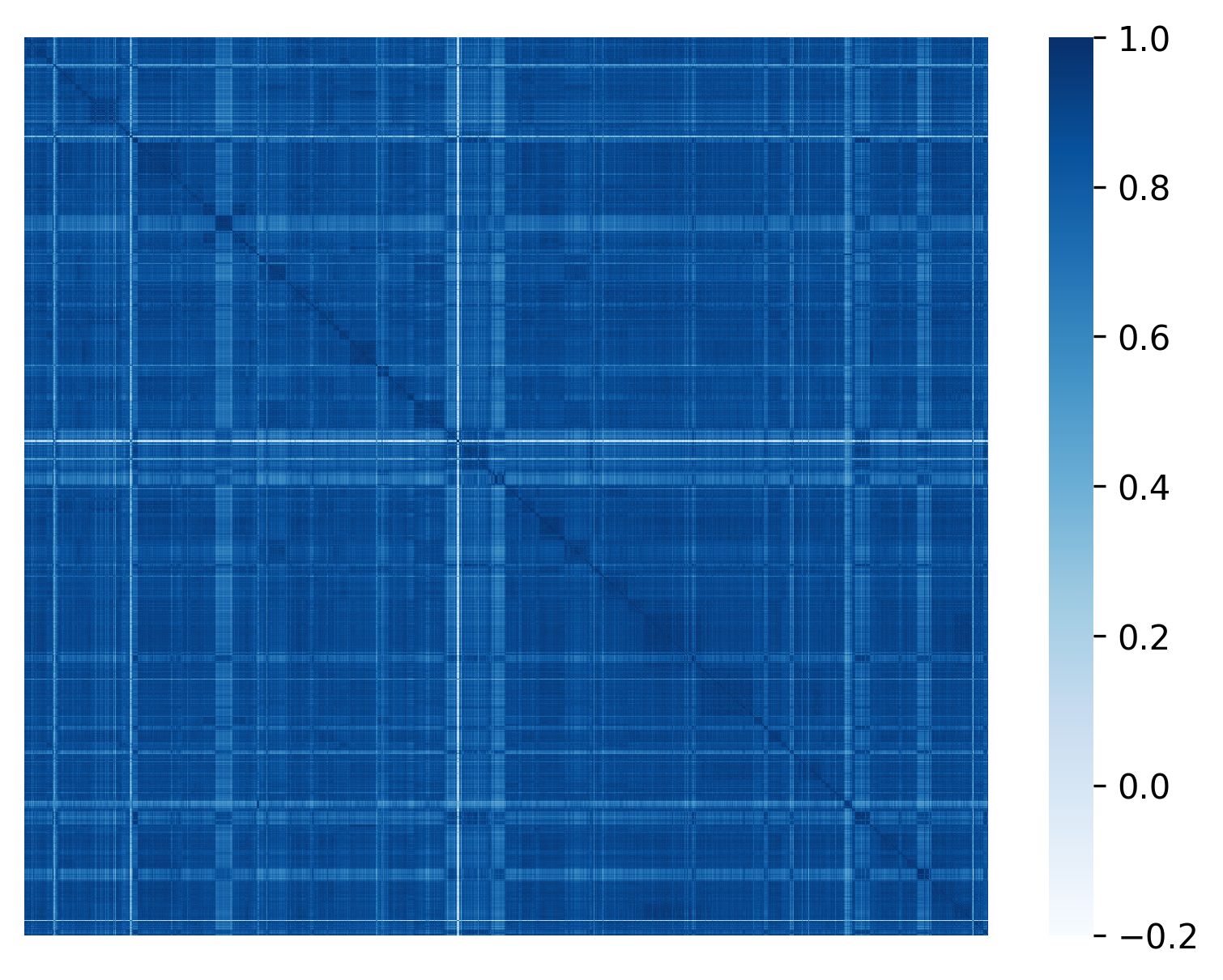}
    \end{minipage}
    \label{fig-anisotropy-exp-base}
}
\subfigure[KaLM]{
    \begin{minipage}[b]{0.45\linewidth}
    \includegraphics[width=1\linewidth]{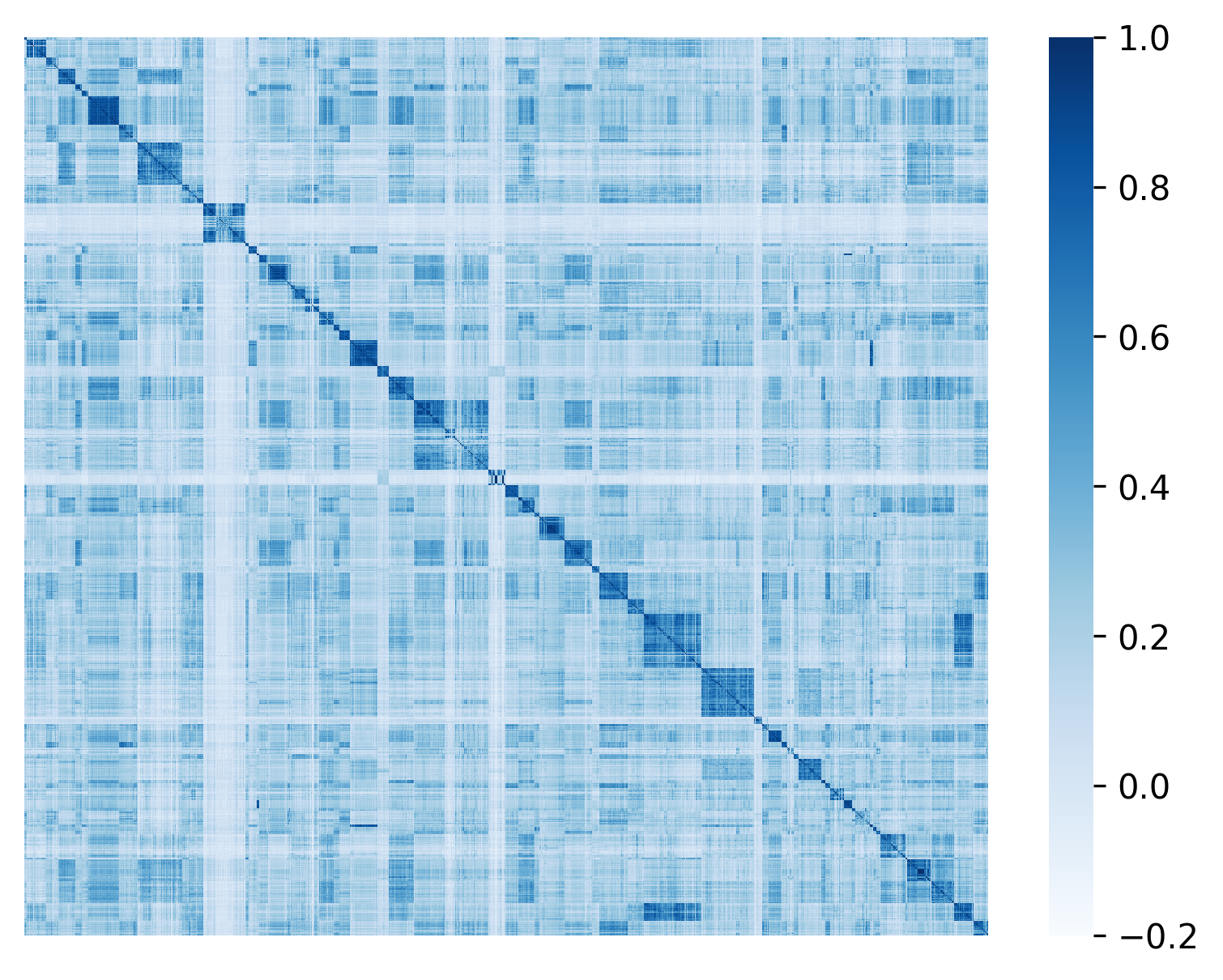}
    \end{minipage}
    \label{fig-anisotropy-exp-kalm}
}
\caption{Similarity matrix on the Wikitext-103 test set. From top-left to bottom-right, element $(i, j)$ denotes the cosine similarity between the $i$-th and the $j$-th sentence.}
\label{fig-anisotropy-exp}
\end{figure}

\begin{figure*}[t]
\centering 
\includegraphics[width=\textwidth]{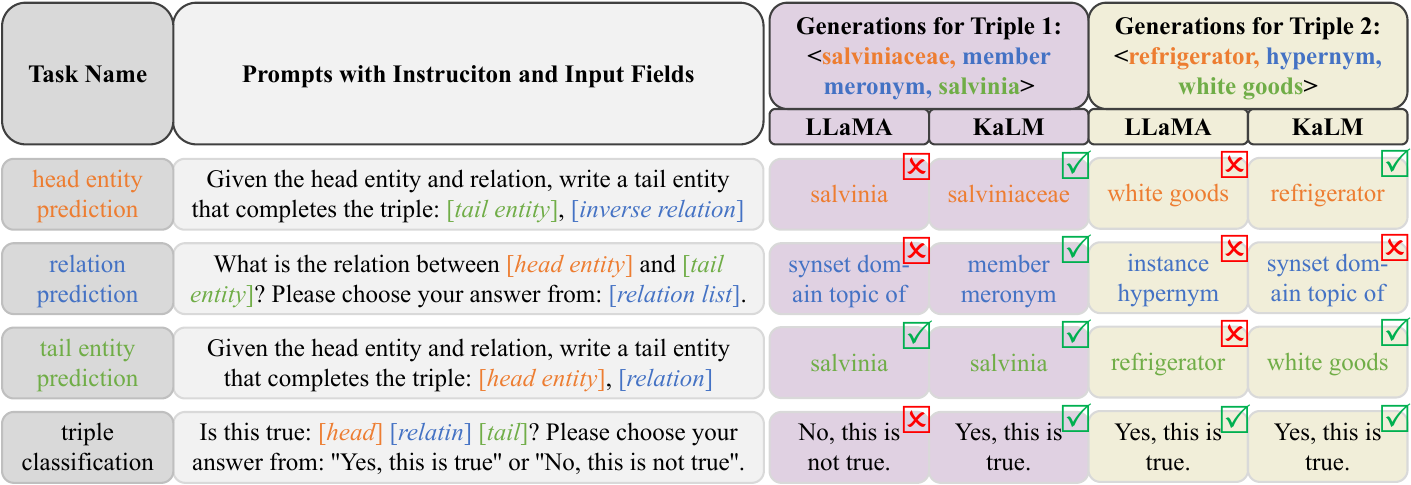}
\caption{Case studies of Llama-2-7B and KaLM on KGQA tasks. Note that the \textcolor[RGB]{237,125,49}{head entity}, \textcolor[RGB]{68,114,196}{relation}, and \textcolor[RGB]{122,173,71}{tail entity} are denoted by different colors. The \textcolor[RGB]{0,176,80}{$\protect\usym{1F5F9}$} mark indicates the correct answer, while \textcolor[RGB]{255,0,0}{$\protect\usym{1F5F5}$} signifies an incorrect answer.}
\label{kgqa-cases}
\end{figure*}

\subsection{Knowledge Representation Assessment}
\label{exp:kgc}
The embedding-based KGC results are shown in Table~\ref{kgc-results}. 
The base LLM failed to finish this task, with all metrics lagging far behind. 
On the WN18RR dataset, our method surpasses prior methods by a substantial margin in terms of MR and Hit@10. Other metrics fall slightly short of state-of-the-art methods, yet remain competitive. 
The performance of KaLM on FB15k-237 is slightly inferior, but it still achieves the best MR. 
Previous description-based methods generally perform poorly on FB15k-237, possibly due to the absence of effective textual descriptions. 
An example relation description from FB15k-237 is ``\textit{/music/artist/origin}'', which is quite vague and abstract. 
SimKGC uses a large batch size through intricate negative sampling methods and incorporates neighbor description augmentation and neighbor-based re-ranking techniques. 
C-LMKE uses self-adversarial negative sampling and utilizes extra entity degree information. 
These tricks enable SimKGC and C-LMKE to achieve higher performance. 
\textit{Using a larger batch size and more techniques can further improve other metrics of KaLM.} 
Overall, the results reveal that KaLM notably enhances the quality of knowledge representation, bringing performance boosts in KGC tasks.

\subsection{Knowledge Inference Evaluation}
\label{exp:kgqa}
The generation-based KGQA results are depicted in Figure~\ref{kgqa-results}. 
Llama-2-7B performs poorly in entity prediction and relation prediction. 
Our method demonstrates a significant performance boost in all generation-based KGQA tasks, including head/tail entity prediction, relation prediction, and triple classification. 
Furthermore, despite a slight increase in perplexity (PPL) scores on Wikitext-103 \cite{merity2016pointer} test set, our method still shows competitive performance in the MMLU test. 
The results demonstrate that KaLM achieves effective knowledge alignment, bringing in significantly improved KGQA performance while preserving the original generative and knowledge inference capabilities.

\subsection{Visualization of Knowledge Representation and Case Studies}
\label{exp:case}
\textbf{We provide visualization results to illustrate knowledge representation improvements.} 
Figure~\ref{fig-anisotropy-exp} shows the sentence similarity matrix of Llama-2-7B and KaLM on Wikitext-103. 
The diagonal elements denote the similarity of the same sentence, so the values are always 1. 
From color intensity, it is evident that KaLM learns more discriminative sentence representations, while Llama-2-7B assigns high similarity for arbitrary sentences. 
The sentences are organized by celebrities and their careers, thus there should also be a high similarity between adjacent sentences. 
This phenomenon is reflected in the similarity matrix of KaLM in Figure~\ref{fig-anisotropy-exp-kalm}, manifested in the smaller matrices with darker colors along the diagonal. 
\textit{More concretely, numerical analysis shows that after training with our method, the sentence-level anisotropy value significantly decreased from 0.83 to 0.21.}

\noindent \textbf{We present KGQA generation cases to demonstrate knowledge inference enhancements.} 
Figure~\ref{kgqa-cases} illustrates concrete examples of KGQA generation results on the WN18RR dataset. 
We showcase the responses generated by Llama-2-7B and KaLM for four tasks involving \textcolor[RGB]{237,125,49}{head entity prediction}, \textcolor[RGB]{68,114,196}{relation prediction}, \textcolor[RGB]{122,173,71}{tail entity prediction}, and triple classification. 
The prompt templates for each subtask are shown in the second column of Figure~\ref{kgqa-cases}, where the ``\textit{inverse relation}'' is the original relation description with a prefix word ``\textit{inverse}'' and the ``\textit{relation list}'' consists of all relations concatenated by the symbol ``|''. 
We display the generated answers for triple \textit{<\textcolor[RGB]{237,125,49}{salviniaceae}, \textcolor[RGB]{68,114,196}{member meronym}, \textcolor[RGB]{122,173,71}{salvinia}>} and triple <\textit{\textcolor[RGB]{237,125,49}{refrigerator}, \textcolor[RGB]{68,114,196}{hypernym}, \textcolor[RGB]{122,173,71}{white goods}}>. 
The base LLaMA frequently gives wrong answers and tends to identify keywords from the input prompts for prediction. 
In contrast, our method can understand the questions and correctly answer various KGQA tasks in most cases.

\section{Conclusion}
\label{sec:conclusion}

In this work, we show that the subpar performance of LLMs on knowledge-driven tasks stems from a lack of effective knowledge alignment. 
We present \textbf{KaLM}, a novel knowledge-aligned language modeling approach for aligning autoregressive LLMs with KG knowledge. 
Specifically, we identify two imperative objectives to achieve knowledge alignment: \textit{explicit knowledge alignment} and \textit{implicit knowledge alignment}. 
We conducted comprehensive experiments and analyses on embedding-based KGC and generation-based KGQA. 
Experimental results demonstrate that our method achieves effective knowledge alignment and consistently improves performance on knowledge-driven tasks.

\newpage
\section*{Limitations}
\label{sec:limitations}
There are several future directions to improve this work. 
Firstly, due to the limitation of computational resources, we used the limited-scale LLMs to train and evaluate our method. Evaluations on larger-scale LLMs, such as the 13B and 70B models, can further validate the effectiveness of our approach. 
Secondly, we use a simple linear combination of explicit alignment loss and implicit alignment loss as the final training objective for KaLM. Further investigations into various forms of loss combinations remain to be explored to maximize the utility of knowledge-aligned language modeling. 
Finally, we can delve into the performance of the knowledge representations obtained from knowledge-aligned language modeling in cross-domain applications such as retrieval-augmented generation, to gain broader insights into the generalization capabilities of the proposed approach.



\bibliography{acl_latex}

@article{achiam2023gpt,
  title={Gpt-4 technical report},
  author={Achiam, Josh and Adler, Steven and Agarwal, Sandhini and Ahmad, Lama and Akkaya, Ilge and Aleman, Florencia Leoni and Almeida, Diogo and Altenschmidt, Janko and Altman, Sam and Anadkat, Shyamal and others},
  journal={arXiv preprint arXiv:2303.08774},
  year={2023}
}

@article{anil2023palm,
  title={Palm 2 technical report},
  author={Anil, Rohan and Dai, Andrew M and Firat, Orhan and Johnson, Melvin and Lepikhin, Dmitry and Passos, Alexandre and Shakeri, Siamak and Taropa, Emanuel and Bailey, Paige and Chen, Zhifeng and others},
  journal={arXiv preprint arXiv:2305.10403},
  year={2023}
}

@article{li2022pretrained,
  title={Pretrained language models for text generation: A survey},
  author={Li, Junyi and Tang, Tianyi and Zhao, Wayne Xin and Nie, Jian-Yun and Wen, Ji-Rong},
  journal={arXiv preprint arXiv:2201.05273},
  year={2022}
}

@inproceedings{su2019generalizing,
  title={Generalizing question answering system with pre-trained language model fine-tuning},
  author={Su, Dan and Xu, Yan and Winata, Genta Indra and Xu, Peng and Kim, Hyeondey and Liu, Zihan and Fung, Pascale},
  booktitle={Proceedings of the 2nd Workshop on Machine Reading for Question Answering},
  pages={203--211},
  year={2019}
}

@inproceedings{chen2022knowledge,
  title={Knowledge Is Flat: A Seq2Seq Generative Framework for Various Knowledge Graph Completion},
  author={Chen, Chen and Wang, Yufei and Li, Bing and Lam, Kwok-Yan},
  booktitle={Proceedings of the 29th International Conference on Computational Linguistics},
  pages={4005--4017},
  year={2022}
}

@article{yao2023exploring,
  title={Exploring large language models for knowledge graph completion},
  author={Yao, Liang and Peng, Jiazhen and Mao, Chengsheng and Luo, Yuan},
  journal={arXiv preprint arXiv:2308.13916},
  year={2023}
}

@inproceedings{fu2023revisiting,
  title={Revisiting the Knowledge Injection Frameworks},
  author={Fu, Peng and Zhang, Yiming and Wang, Haobo and Qiu, Weikang and Zhao, Junbo},
  booktitle={Proceedings of the 2023 Conference on Empirical Methods in Natural Language Processing},
  pages={10983--10997},
  year={2023}
}

@inproceedings{ethayarajh2019contextual,
  title={How Contextual are Contextualized Word Representations? Comparing the Geometry of BERT, ELMo, and GPT-2 Embeddings},
  author={Ethayarajh, Kawin},
  booktitle={Proceedings of the 2019 Conference on Empirical Methods in Natural Language Processing and the 9th International Joint Conference on Natural Language Processing (EMNLP-IJCNLP)},
  pages={55--65},
  year={2019}
}

@inproceedings{li2020sentence,
  title={On the Sentence Embeddings from Pre-trained Language Models},
  author={Li, Bohan and Zhou, Hao and He, Junxian and Wang, Mingxuan and Yang, Yiming and Li, Lei},
  booktitle={Proceedings of the 2020 Conference on Empirical Methods in Natural Language Processing (EMNLP)},
  pages={9119--9130},
  year={2020}
}

@article{su2021whitening,
  title={Whitening sentence representations for better semantics and faster retrieval},
  author={Su, Jianlin and Cao, Jiarun and Liu, Weijie and Ou, Yangyiwen},
  journal={arXiv preprint arXiv:2103.15316},
  year={2021}
}

@article{su2022contrastive,
  title={A contrastive framework for neural text generation},
  author={Su, Yixuan and Lan, Tian and Wang, Yan and Yogatama, Dani and Kong, Lingpeng and Collier, Nigel},
  journal={Advances in Neural Information Processing Systems},
  volume={35},
  pages={21548--21561},
  year={2022}
}

@article{bordes2013translating,
  title={Translating embeddings for modeling multi-relational data},
  author={Bordes, Antoine and Usunier, Nicolas and Garcia-Duran, Alberto and Weston, Jason and Yakhnenko, Oksana},
  journal={Advances in neural information processing systems},
  volume={26},
  year={2013}
}

@article{sun2019rotate,
  title={Rotate: Knowledge graph embedding by relational rotation in complex space},
  author={Sun, Zhiqing and Deng, Zhi-Hong and Nie, Jian-Yun and Tang, Jian},
  journal={arXiv preprint arXiv:1902.10197},
  year={2019}
}

@article{yao2019kg,
  title={KG-BERT: BERT for knowledge graph completion},
  author={Yao, Liang and Mao, Chengsheng and Luo, Yuan},
  journal={arXiv preprint arXiv:1909.03193},
  year={2019}
}

@inproceedings{shen2022joint,
  title={Joint Language Semantic and Structure Embedding for Knowledge Graph Completion},
  author={Shen, Jianhao and Wang, Chenguang and Gong, Linyuan and Song, Dawn},
  booktitle={Proceedings of the 29th International Conference on Computational Linguistics},
  pages={1965--1978},
  year={2022}
}

@article{wang2022language,
  title={Language models as knowledge embeddings},
  author={Wang, Xintao and He, Qianyu and Liang, Jiaqing and Xiao, Yanghua},
  journal={arXiv preprint arXiv:2206.12617},
  year={2022}
}

@inproceedings{wang2022simkgc,
  title={SimKGC: Simple Contrastive Knowledge Graph Completion with Pre-trained Language Models},
  author={Wang, Liang and Zhao, Wei and Wei, Zhuoyu and Liu, Jingming},
  booktitle={Proceedings of the 60th Annual Meeting of the Association for Computational Linguistics (Volume 1: Long Papers)},
  pages={4281--4294},
  year={2022}
}

@article{muennighoff2022sgpt,
  title={Sgpt: Gpt sentence embeddings for semantic search},
  author={Muennighoff, Niklas},
  journal={arXiv preprint arXiv:2202.08904},
  year={2022}
}

@article{ma2023fine,
  title={Fine-tuning llama for multi-stage text retrieval},
  author={Ma, Xueguang and Wang, Liang and Yang, Nan and Wei, Furu and Lin, Jimmy},
  journal={arXiv preprint arXiv:2310.08319},
  year={2023}
}

@misc{alpaca,
  author = {Rohan Taori and Ishaan Gulrajani and Tianyi Zhang and Yann Dubois and Xuechen Li and Carlos Guestrin and Percy Liang and Tatsunori B. Hashimoto },
  title = {Stanford Alpaca: An Instruction-following LLaMA model},
  year = {2023},
  publisher = {GitHub},
  journal = {GitHub repository},
  howpublished = {\url{https://github.com/tatsu-lab/stanford_alpaca}},
}

@inproceedings{wang2020understanding,
  title={Understanding contrastive representation learning through alignment and uniformity on the hypersphere},
  author={Wang, Tongzhou and Isola, Phillip},
  booktitle={International Conference on Machine Learning},
  pages={9929--9939},
  year={2020},
  organization={PMLR}
}

@inproceedings{wang2021understanding,
  title={Understanding the behaviour of contrastive loss},
  author={Wang, Feng and Liu, Huaping},
  booktitle={Proceedings of the IEEE/CVF conference on computer vision and pattern recognition},
  pages={2495--2504},
  year={2021}
}

@article{sun2023think,
  title={Think-on-graph: Deep and responsible reasoning of large language model with knowledge graph},
  author={Sun, Jiashuo and Xu, Chengjin and Tang, Lumingyuan and Wang, Saizhuo and Lin, Chen and Gong, Yeyun and Shum, Heung-Yeung and Guo, Jian},
  journal={arXiv preprint arXiv:2307.07697},
  year={2023}
}

@article{jiang2023structgpt,
  title={Structgpt: A general framework for large language model to reason over structured data},
  author={Jiang, Jinhao and Zhou, Kun and Dong, Zican and Ye, Keming and Zhao, Wayne Xin and Wen, Ji-Rong},
  journal={arXiv preprint arXiv:2305.09645},
  year={2023}
}

@article{feng2023trends,
  title={Trends in integration of knowledge and large language models: A survey and taxonomy of methods, benchmarks, and applications},
  author={Feng, Zhangyin and Ma, Weitao and Yu, Weijiang and Huang, Lei and Wang, Haotian and Chen, Qianglong and Peng, Weihua and Feng, Xiaocheng and Qin, Bing and others},
  journal={arXiv preprint arXiv:2311.05876},
  year={2023}
}

@article{wang2021kepler,
  title={KEPLER: A unified model for knowledge embedding and pre-trained language representation},
  author={Wang, Xiaozhi and Gao, Tianyu and Zhu, Zhaocheng and Zhang, Zhengyan and Liu, Zhiyuan and Li, Juanzi and Tang, Jian},
  journal={Transactions of the Association for Computational Linguistics},
  volume={9},
  pages={176--194},
  year={2021},
  publisher={MIT Press One Rogers Street, Cambridge, MA 02142-1209, USA journals-info~…}
}

@article{yasunaga2022deep,
  title={Deep bidirectional language-knowledge graph pretraining},
  author={Yasunaga, Michihiro and Bosselut, Antoine and Ren, Hongyu and Zhang, Xikun and Manning, Christopher D and Liang, Percy S and Leskovec, Jure},
  journal={Advances in Neural Information Processing Systems},
  volume={35},
  pages={37309--37323},
  year={2022}
}

@article{zhang2023making,
  title={Making large language models perform better in knowledge graph completion},
  author={Zhang, Yichi and Chen, Zhuo and Zhang, Wen and Chen, Huajun},
  journal={arXiv preprint arXiv:2310.06671},
  year={2023}
}

@inproceedings{chen2020simple,
  title={A simple framework for contrastive learning of visual representations},
  author={Chen, Ting and Kornblith, Simon and Norouzi, Mohammad and Hinton, Geoffrey},
  booktitle={International conference on machine learning},
  pages={1597--1607},
  year={2020},
  organization={PMLR}
}

@inproceedings{liu2021hit,
  title={Hit: Hierarchical transformer with momentum contrast for video-text retrieval},
  author={Liu, Song and Fan, Haoqi and Qian, Shengsheng and Chen, Yiru and Ding, Wenkui and Wang, Zhongyuan},
  booktitle={Proceedings of the IEEE/CVF International Conference on Computer Vision},
  pages={11915--11925},
  year={2021}
}

@article{gunel2020supervised,
  title={Supervised contrastive learning for pre-trained language model fine-tuning},
  author={Gunel, Beliz and Du, Jingfei and Conneau, Alexis and Stoyanov, Ves},
  journal={arXiv preprint arXiv:2011.01403},
  year={2020}
}

@article{touvron2023llama,
  title={Llama 2: Open foundation and fine-tuned chat models},
  author={Touvron, Hugo and Martin, Louis and Stone, Kevin and Albert, Peter and Almahairi, Amjad and Babaei, Yasmine and Bashlykov, Nikolay and Batra, Soumya and Bhargava, Prajjwal and Bhosale, Shruti and others},
  journal={arXiv preprint arXiv:2307.09288},
  year={2023}
}

@article{hu2021lora,
  title={Lora: Low-rank adaptation of large language models},
  author={Hu, Edward J and Shen, Yelong and Wallis, Phillip and Allen-Zhu, Zeyuan and Li, Yuanzhi and Wang, Shean and Wang, Lu and Chen, Weizhu},
  journal={arXiv preprint arXiv:2106.09685},
  year={2021}
}

@inproceedings{dettmers2018convolutional,
  title={Convolutional 2d knowledge graph embeddings},
  author={Dettmers, Tim and Minervini, Pasquale and Stenetorp, Pontus and Riedel, Sebastian},
  booktitle={Proceedings of the AAAI conference on artificial intelligence},
  volume={32},
  number={1},
  year={2018}
}

@inproceedings{toutanova2015observed,
  title={Observed versus latent features for knowledge base and text inference},
  author={Toutanova, Kristina and Chen, Danqi},
  booktitle={Proceedings of the 3rd workshop on continuous vector space models and their compositionality},
  pages={57--66},
  year={2015}
}

@article{hendrycks2020measuring,
  title={Measuring massive multitask language understanding},
  author={Hendrycks, Dan and Burns, Collin and Basart, Steven and Zou, Andy and Mazeika, Mantas and Song, Dawn and Steinhardt, Jacob},
  journal={arXiv preprint arXiv:2009.03300},
  year={2020}
}

@inproceedings{wang2021structure,
  title={Structure-augmented text representation learning for efficient knowledge graph completion},
  author={Wang, Bo and Shen, Tao and Long, Guodong and Zhou, Tianyi and Wang, Ying and Chang, Yi},
  booktitle={Proceedings of the Web Conference 2021},
  pages={1737--1748},
  year={2021}
}

@inproceedings{yang2015embedding,
  title={Embedding Entities and Relations for Learning and Inference in Knowledge Bases},
  author={Yang, Bishan and Yih, Scott Wen-tau and He, Xiaodong and Gao, Jianfeng and Deng, Li},
  booktitle={Proceedings of the International Conference on Learning Representations (ICLR) 2015},
  year={2015}
}

@inproceedings{sun2018rotate,
  title={RotatE: Knowledge Graph Embedding by Relational Rotation in Complex Space},
  author={Sun, Zhiqing and Deng, Zhi-Hong and Nie, Jian-Yun and Tang, Jian},
  booktitle={International Conference on Learning Representations},
  year={2018}
}

@inproceedings{merity2016pointer,
  title={Pointer Sentinel Mixture Models},
  author={Merity, Stephen and Xiong, Caiming and Bradbury, James and Socher, Richard},
  booktitle={International Conference on Learning Representations},
  year={2016}
}

@article{he2021towards,
  title={Towards a unified view of parameter-efficient transfer learning},
  author={He, Junxian and Zhou, Chunting and Ma, Xuezhe and Berg-Kirkpatrick, Taylor and Neubig, Graham},
  journal={arXiv preprint arXiv:2110.04366},
  year={2021}
}

\newpage
\appendix

\section{More Detailed Review of Related Work}
\label{sec:appendix-related}

This work focuses on fine-tuning autoregressive LLMs to align with KG knowledge. 
Our work intersects with the following research areas: Knowledge Enhancement for LLMs, Knowledge Graph Completion, Contrastive Representation Learning, and Representation Anisotropy of Language Models.

\subsection{Knowledge Enhancement for LLMs}
Knowledge enhancement aims to incorporate factual and domain-specific knowledge into LLMs to address their knowledge deficiencies. This can be divided into retrieval-based knowledge augmentation and training-based knowledge integration. 
\textit{Retrieval-based knowledge augmentation} methods leverage external retrieval modules to provide additional knowledge, aiming to improve the knowledge reasoning capability of LLMs \cite{sun2023think,jiang2023structgpt}. 
However, this approach may lead to knowledge conflicts \cite{feng2023trends}, where the knowledge in LLMs and the knowledge in the retrieved documents are inconsistent or the retrieved multiple documents are contradictory. 
\textit{Training-based knowledge integration} methods involve using the textual descriptions of KG triples to pre-train or fine-tune LLMs, aiming to achieve knowledge alignment. 
These methods can be categorized into explicit alignment \cite{wang2021kepler,yasunaga2022deep} and implicit alignment \cite{yao2023exploring,zhang2023making} based on whether they directly optimize the knowledge representation. 
Nevertheless, these methods have either sacrificed the generative capability or lacked effective representation alignment. 
Our approach enhances the knowledge of LLMs via a unique joint objective of explicit alignment and implicit alignment, improving the quality of knowledge representations and generative knowledge reasoning capabilities.

\subsection{Knowledge Graph Completion}
Knowledge graph completion (KGC) refers to inferring missing triples from an incomplete KG, which can be used to evaluate the knowledge reasoning ability and knowledge representation quality of LLMs. 
Existing KGC methods can be categorized into structure-based and description-based. 
\textit{Structure-based methods} represent entities and relations as fixed-dimensional vector embeddings and use scoring functions to assess the plausibility of triples \cite{bordes2013translating,sun2019rotate}. 
\textit{Description-based methods} further incorporate the textual descriptions of KG triples and leverage pre-trained language models to learn knowledge representations of entities and relations \cite{yao2019kg,shen2022joint,wang2022language}. 
However, structure-based methods fail to generalize to unseen entities and relations, while description-based methods lack interpretability and exhibit lower efficiency when dealing with extremely large KGs.

\subsection{Contrastive Representation Learning}
Contrastive learning has demonstrated remarkable success in learning representations across various domains \cite{chen2020simple,liu2021hit,gunel2020supervised}. 
The goal is to learn representations that capture shared information between positive pairs while remaining invariant to perturbing noise. 
The commonly used contrastive learning objectives share a standardized design involving a softmax function over cosine similarity of paired features, with a temperature parameter to control the penalty strength on hard negative samples. 
\citet{wang2020understanding} propose understanding contrastive learning through the lens of alignment and uniformity on the hypersphere. 
\citet{wang2021understanding} show that temperature in the contrastive loss controls the strength of penalties over negative samples.

\subsection{Representation Anisotropy of Language Models}
PLMs have long been plagued by \textit{representation anisotropy} \cite{ethayarajh2019contextual}, where the learned token and sentence representations are confined to a narrow cone within the entire representation space. 
The issue of representation anisotropy not only results in model degradation \cite{su2022contrastive} but also leads to poor performance on discriminative tasks \cite{muennighoff2022sgpt}. 
Previous work on alleviating representation anisotropy has mainly focused on post-processing techniques such as normalizing flows \cite{li2020sentence} or whitening operations \cite{su2021whitening} to obtain isotropic representations. 
\citet{su2022contrastive} propose a contrastive training objective to encourage learning isotropic token representations. 
However, these methods mainly improve the isotropy of token representations without enhancing the discriminability of sentence representations. 
Our method improves the token-level and sentence-level representation anisotropy of LLMs through dual-view knowledge graph contrastive learning, and it has rigorous theoretical guarantees.

\section{Proofs for Theoretical Analysis}
\label{sec:appendix-theory}

In this section, we present proofs for theorems in Sections \ref{sec:theory-1} and \ref{sec:theory-2} of the main paper. 

\subsection{Proof of Theorem \ref{theorem-1} in Section \ref{sec:theory-1}}
\label{sec:theory-1-proof}

Recall the reformulated dual-view knowledge graph contrastive learning objective (Equation~\ref{eq:exp-reform}): 
\begin{equation*}
\begin{split}
    \label{eq:exp-reform-1}
    & \mathcal{L}_{\texttt{exp}}(f; \tau, \mathcal{N}) \triangleq 
    \underset{\substack{(\mathcal{D}_{hr}, \mathcal{D}_t) \sim p_{\texttt{pos}} \\ {\{\mathcal{D}_t{_i^{\prime}}\}_{i=1}^\mathcal{N}} \stackrel{i.i.d.}{\sim} p_{\texttt{data}}}} {\mathbb{E}} \\
    & \left[ - \log \frac{e^{f(\mathcal{D}_{hr})^{\top}f(\mathcal{D}_t)/\tau}}{e^{f(\mathcal{D}_{hr})^{\top}f(\mathcal{D}_t)/\tau} + \sum\limits_{i=1}^\mathcal{N} e^{f(\mathcal{D}_{hr})^{\top}f(\mathcal{D}_t{_i^{\prime}})/\tau}} \right] .
\end{split}
\end{equation*}
From the symmetry of $p$, we can derive: 
\begin{equation*}
\begin{split}
    \label{eq:exp-reform-2}
    & \mathcal{L}_{\texttt{exp}}(f; \tau, \mathcal{N}) = \\
    & \underset{(\mathcal{D}_{hr}, \mathcal{D}_t) \sim p_{\texttt{pos}}}{\mathbb{E}} \left[ -f(\mathcal{D}_{hr})^{\top}f(\mathcal{D}_t)/\tau \right] 
    + \underset{\substack{(\mathcal{D}_{hr}, \mathcal{D}_t) \sim p_{\texttt{pos}} \\ {\{\mathcal{D}_t{_i^{\prime}}\}_{i=1}^\mathcal{N}} \stackrel{i.i.d.}{\sim} p_{\texttt{data}}}} {\mathbb{E}} \\
    & \left[ \log \left( e^{f(\mathcal{D}_{hr})^{\top}f(\mathcal{D}_t)/\tau} + \sum\limits_{i=1}^\mathcal{N} e^{f(\mathcal{D}_t{_i^{\prime}})^{\top}f(\mathcal{D}_t)/\tau} \right) \right] .
\end{split}
\end{equation*}
Note that we can have the following limits almost surely by the strong law of large numbers (SLLN):
\begin{equation*}
\begin{split}
    \label{eq:exp-reform-3}
    & \lim_{\mathcal{N} \rightarrow \infty} \log 
    \left( \frac{e^{f(\mathcal{D}_{hr})^{\top}f(\mathcal{D}_t)/\tau}}{\mathcal{N}} 
    + \frac{\sum\limits_{i=1}^\mathcal{N} e^{f(\mathcal{D}_t{_i^{\prime}})^{\top}f(\mathcal{D}_t)/\tau}}{\mathcal{N}} \right) \\
    & = \log \underset{\mathcal{D}_i^- \sim p_{\texttt{data}}}{\mathbb{E}} 
    f(\mathcal{D}_i^-)^\top f(\mathcal{D}_i)/\tau .
\end{split}
\end{equation*}
Then we can derive the following limits:
\begin{equation*}
    \begin{split}
        & \lim_{\mathcal{N} \rightarrow \infty} \mathcal{L}_{\texttt{exp}}(f; \tau, \mathcal{N}) - \log \mathcal{N} \\
        & = \underset{(\mathcal{D}_{hr}, \mathcal{D}_t) \sim p_{\texttt{pos}}}{\mathbb{E}} \left[ -f(\mathcal{D}_{hr})^{\top}f(\mathcal{D}_t)/\tau \right] \\
        & \quad + \lim_{\mathcal{N} \rightarrow \infty} 
        \underset{\substack{(\mathcal{D}_{hr}, \mathcal{D}_t) \sim p_{\texttt{pos}} \\ {\{\mathcal{D}_t{_i^{\prime}}\}_{i=1}^\mathcal{N}} \stackrel{i.i.d.}{\sim} p_{\texttt{data}}}} {\mathbb{E}} \\
        & \quad \left[ \log \left( \frac{e^{f(\mathcal{D}_{hr})^{\top}f(\mathcal{D}_t)/\tau}}{\mathcal{N}} 
        + \frac{\sum\limits_{i=1}^\mathcal{N} e^{f(\mathcal{D}_t{_i^{\prime}})^{\top}f(\mathcal{D}_t)/\tau}}{\mathcal{N}} \right) \right] \\
        & = \underset{(\mathcal{D}_{hr}, \mathcal{D}_t) \sim p_{\texttt{pos}}}{\mathbb{E}} \left[ -f(\mathcal{D}_{hr})^{\top}f(\mathcal{D}_t)/\tau \right] 
    \end{split}
\end{equation*}
\begin{equation*}
    \begin{split}
        & \quad + \mathbb{E} \left[ \lim_{\mathcal{N} \rightarrow \infty} \log \left( \frac{e^{f(\mathcal{D}_{hr})^{\top}f(\mathcal{D}_t)/\tau}}{\mathcal{N}} 
        + \frac{\sum\limits_{i=1}^\mathcal{N} e^{f(\mathcal{D}_t{_i^{\prime}})^{\top}f(\mathcal{D}_t)/\tau}}{\mathcal{N}} \right) \right] \\
        & = -\frac{1}{\tau} \underset{(\mathcal{D}_{hr}, \mathcal{D}_t) \sim p_{\texttt{pos}}}{\mathbb{E}} \left[ f(\mathcal{D}_{hr})^{\top}f(\mathcal{D}_t) \right] \\
        & \quad + \underset{\mathcal{D}_i \sim p_{data}}{\mathbb{E}}\left[\log\underset{\mathcal{D}_i^- \sim p_{\texttt{data}}}{\mathbb{E}}\left[e^{f(\mathcal{D}_i^-)^\top f(\mathcal{D}_i)/\tau}\right]\right].
    \end{split}
\end{equation*}
We now finish the \textit{proof of Theorem \ref{theorem-1}}. 
\begin{equation*}
    \label{eq-asymptotics-yyy}
    \begin{split}
        & \lim_{\mathcal{N} \rightarrow \infty} \mathcal{L}_{\texttt{exp}}(f; \tau, \mathcal{N}) - \log \mathcal{N} = \\
        & \quad -\frac{1}{\tau} \underset{(\mathcal{D}_{hr}, \mathcal{D}_t) \sim p_{\texttt{pos}}}{\mathbb{E}} \left[ f(\mathcal{D}_{hr})^{\top}f(\mathcal{D}_t) \right] \\
        & \quad + \underset{\mathcal{D}_i \sim p_{data}}{\mathbb{E}}\left[\log\underset{\mathcal{D}_i^- \sim p_{\texttt{data}}}{\mathbb{E}}\left[e^{f(\mathcal{D}_i^-)^\top f(\mathcal{D}_i)/\tau}\right]\right].
    \end{split}
\end{equation*}

\subsection{Proof of Theorem \ref{theorem-2}  in Section \ref{sec:theory-2}}
\label{sec:theory-2-proof}

Recall the asymptotics of the explicit knowledge alignment objective when the number of negative samples approaches infinity (Equation~\ref{eq-asymptotics}): 
\begin{equation*}
\label{eq-asymptotics-1}
\begin{split}
    & \lim_{\mathcal{N} \rightarrow \infty} \mathcal{L}_{\texttt{exp}}(f; \tau, \mathcal{N}) - \log \mathcal{N} = \\
    & \qquad -\frac{1}{\tau} \underset{(\mathcal{D}_{hr}, \mathcal{D}_t) \sim p_{\texttt{pos}}}{\mathbb{E}} \left[ f(\mathcal{D}_{hr})^{\top}f(\mathcal{D}_t) \right] \\
    & \qquad + \underset{\mathcal{D}_i \sim p_{data}}{\mathbb{E}}\left[\log\underset{\mathcal{D}_i^- \sim p_{\texttt{data}}}{\mathbb{E}}\left[e^{f(\mathcal{D}_i^-)^\top f(\mathcal{D}_i)/\tau}\right]\right] .
\end{split}
\end{equation*}
Recall the definition of sentence-level anisotropy value of corpus $\{\mathcal{D}_i\}_{i=1}^N$ (Equation~\ref{eq-anisotropy}): 
\begin{equation*}
\label{eq-anisotropy-1}
    \texttt{anisotropy}_{\{\mathcal{D}\}} = \frac{1}{N(N-1)} \sum_{i=1}^N \sum_{j=1, j \neq i}^N e_i^\top e_j .
\end{equation*}
We can further derive the inequality below from the second term of Equation~\ref{eq-asymptotics} with Jensen's inequality when $p_{\texttt{data}}$ is uniform over finite samples $\{\mathcal{D}_i\}_{i=1}^N$: 

\begin{equation*}
\label{eq-finite-1}
\begin{split}
    & \underset{\mathcal{D}_i \sim p_{data}}{\mathbb{E}}\left[\log\underset{\mathcal{D}_i^-\sim p_{data}}{\mathbb{E}}\left[e^{f(\mathcal{D}_i^-)^\top f(\mathcal{D}_i)/\tau}\right]\right] \\ 
    & = \frac{1}{N}\sum_{i=1}^N\log\left(\frac{1}{N}\sum_{j=1}^N e^{e_i^\top e_j/\tau}\right) \\
    & \geq \frac{1}{\tau N^2}\sum_{i=1}^N\sum_{j=1}^N e_i^\top e_j \\
    & = \frac{1}{\tau N^2} \left( \sum_{i=1}^N \sum_{j=1, j \neq i}^N e_i^\top e_j + N \right) \\ 
    & = \frac{N-1}{\tau N} \cdot \frac{1}{N(N-1)} \sum_{i=1}^N \sum_{j=1, j \neq i}^N e_i^\top e_j + \frac{1}{\tau N} \\
    & = \frac{N-1}{\tau N} \cdot \texttt{anisotropy}_{\{\mathcal{D}\}} + \frac{1}{\tau N} .
\end{split}
\end{equation*}
We now finish the \textit{proof of Theorem \ref{theorem-2}}. 
\begin{equation*}
    \label{eq-finite-2}
    \begin{split}
        & \underset{\mathcal{D}_i \sim p_{data}}{\mathbb{E}}\left[\log\underset{\mathcal{D}_i^-\sim p_{data}}{\mathbb{E}}\left[e^{f(\mathcal{D}_i^-)^\top f(\mathcal{D}_i)/\tau}\right]\right] \\ 
        & \quad \geq \frac{N-1}{\tau N} \cdot \texttt{anisotropy}_{\{\mathcal{D}\}} + \frac{1}{\tau N} .
    \end{split}
\end{equation*}

\section{Further Details about Implementation and Experimental Setup}
\label{sec:appendix-setup}

\subsection{Dataset Details}

WN18RR and FB15k-237 are commonly used KGs derived from WordNet and Freebase, respectively \cite{bordes2013translating}. They have been carefully constructed to prevent test set leakage by removing inverse relations. We use these datasets for training and evaluation. 
The statistics are shown in Table~\ref{kgc-statistics}. 

\begin{table}[ht]
\centering
\caption{Statistics of the datasets.}
\scalebox{0.75}{\begin{tabular}{@{}l|lllll@{}}
\toprule
\textbf{Dataset}          & \textbf{\#Entity} & \textbf{\#Relation} & \textbf{\#Train} & \textbf{\#Valid} & \textbf{\#Test} \\ \midrule
WN18RR           &   $40,943$  &    $11$    &  $86,835$   &   $3,034$   &  $3,134$     \\
FB15k-237        &   $14,541$  &    $237$  &  $272,115$  &  $17,535$ & $20,466$ \\
\bottomrule
\end{tabular}}
\label{kgc-statistics}
\end{table}

\subsection{KaLM Implementation Details}
\label{section-c2}

We initially choose Llama-2-7B as the base LLM and fine-tune it through the training objective in Equation~\ref{eq:final-obj}. 
We use varying batch sizes for explicit knowledge alignment and implicit knowledge alignment. For WN18RR, we use a batch size of 24 for explicit alignment and 4 for implicit alignment. For FB15k-237, the batch sizes are 40 for explicit alignment and 6 for implicit alignment.
To save computing resources for parameter-efficient fine-tuning, we use the LoRA \cite{hu2021lora} method to fine-tune the [``$gate\_proj$'', ``$up\_proj$'', ``$down\_proj$''] modules in the feed-forward network of the Llama-2-7B model. We conducted all training on an NVIDIA 4090$\times$8 GPU. 
The hyper-parameters utilized for training KaLM (based on Llama-2-7B) are enumerated in Table~\ref{hyperparameters}.

\begin{table}[ht]
\centering
\caption{Hyper-parameters for training KaLM.}
\scalebox{0.80}{\begin{tabular}{ccc}
\toprule
Hyper-parameters              & WN18RR & FB15k-237 \\
\midrule
epochs                        & 20     & 15        \\
max-description-length         & 50     & 50        \\
max-language-modeling-length  & 256    & 256       \\
explicit-alignment-batch-size & 24     & 40        \\
implicit-alignment-batch-size & 4      & 6         \\
lora-module                   & ffn    & ffn       \\
lora-alpha                    & 16.0   & 16.0      \\
lora-drouout                  & 0.05   & 0.05      \\
lora-rank                     & 8      & 8         \\
bnb-config                           & load-in-8bit   & load-in-8bit     \\
learning-rate                 & 1e-4   & 1e-4      \\
LR-sheduler-type              & cosine & cosine    \\
weight-decay                  & 0.001  & 0.001     \\
gradient-checkpointing        & True   & True      \\
optimizer                     & AdamW  & AdamW     \\
AdamW-beta1                   & 0.9    & 0.9       \\
AdamW-beta2                   & 0.999  & 0.999     \\
bf16                          & True   & True     \\
\bottomrule
\end{tabular}}
\label{hyperparameters}
\end{table}

We also implemented KaLM based on other LLMs to demonstrate the generalizability of our approach, including Llama-3-8B, Mistral-7B-v0.1, OPT-6.7B, Pythia-6.9B, and Pythia-2.8B. 
It is important to note that the feed-forward network layers in the Pythia model are named [``$dense\_h\_to\_4h$'', ``$dense\_4h\_to\_h$''], while in the OPT model they are named [``$fc1$'', ``$fc2$'']. This differs from the feed-forward network layers in the Llama and Mistral model series. 
The parameters used in these experiments are shown in Table~\ref{hyperparameters-more} (only the differing parameters are listed; the unlisted parameters remain consistent with Table~\ref{hyperparameters}).

\begin{table*}[ht]
\centering
\caption{Additional Hyper-parameters for training \textit{KaLM} with different LLMs.}
\scalebox{1.00}{\begin{tabular}{ccccc}
\toprule
Models              & epochs & explicit-batch-size & implicit-batch-size & bnb-config    \\
\midrule
Llama-3-8B-WN                     & 20     & 18 & 3  & load-in-8bit     \\
Llama-3-8B-FB                     & 15     & 36 & 5 & load-in-8bit      \\
Mistral-7B-v0.1-WN                     & 20     & 40 & 5 & load-in-4bit      \\
Mistral-7B-v0.1-FB                     & 15     & 72 & 8  & load-in-4bit     \\
OPT-6.7B-WN                     & 20     & 24 & 3   & load-in-8bit    \\
OPT-6.7B-FB                     & 15     & 40 & 6   & load-in-8bit    \\
Pythia-6.9B-WN                     & 20     & 24 & 4  & load-in-8bit     \\
Pythia-6.9B-FB                     & 15     & 42 & 6   & load-in-8bit    \\
Pythia-2.8B-WN                     & 20     & 48 & 8   & load-in-8bit    \\
Pythia-2.8B-FB                     & 15     & 96 & 10  & load-in-8bit     \\
\bottomrule
\end{tabular}}
\label{hyperparameters-more}
\end{table*}

For the cosine similarity matrix composed of head entity-relation embeddings (row direction) and tail entity embeddings (column direction), we calculate the cross-entropy loss in the row direction (i.e., a head entity-relation embedding matching different tail entity embeddings) and the column direction (i.e., a tail entity embedding matching different head entity-relation embeddings) separately. We then take the average of the two losses to obtain the final InfoNCE loss. Similar to Equation~\ref{eq-row-cl}, the column-direction loss is defined as follows: 
\begin{align*}
    \label{eq-row-cl-more}
    \ell_c = - \log \frac{e^{(\phi(e_{t}, e_{hr})-\gamma)/\tau}}{e^{(\phi(e_{t}, e_{hr})-\gamma)/\tau} + \sum\nolimits_{j=1}^{\mathcal{N}} e^{\phi(e_{t}, e_{{hr}_{j}^{\prime}})/\tau}}.
\end{align*}

\subsection{More Details about Evaluations}

For the embedding-based KGC task, we report five automated metrics: Mean Rank (MR), Mean Reciprocal Rank (MRR), and Hit@$k$ ($k \in \{1, 3, 10\}$). 
MR is the mean rank of all test triplets and MRR denotes the average reciprocal rank of all test triples. Hit@$k$ measures the proportion of entities correctly ranked in the top $k$. 
Following previous work, our method is evaluated under the \textit{filtering setting} \citep{bordes2013translating}, where the scores of all true triples in the training, validation, and testing set are ignored. All results are averaged over the tail direction (a <head entity-relation> embedding matching different tail entity embeddings, i.e., tail entity prediction) and head direction (a <tail entity-inverse relation> embedding matching different head entity embeddings, i.e., head entity prediction). 

For the generation-based KGQA task, we report the prediction accuracy over head entities, tail entities,  relations, and relation classifications. 
To better prompt LLMs for the knowledge graph question-answering task, we selected several triples from the validation set and constructed few-shot examples using the corresponding templates from Table~\ref{kgqa-cases}.

\section{Addition Experimental Results}
\label{sec:appendix-results}

In this section, we provide more experimental results to show the effectiveness of our method.

\subsection{More Experiments on Knowledge Representation Assessment}
In Table~\ref{kgc-results-more}, we present additional knowledge representation results (the embedding-based KGC task) to demonstrate the effectiveness of KaLM in knowledge alignment. 
The best and second-best experimental results are indicated by \textbf{bold} and \underline{underline} texts, respectively. 
Overall, the proposed method achieved excellent performance on the embedding-based KGC task, delivering impressive results in the MR and Hit@10 metrics, while also being highly competitive in other metrics.

The experimental results based on LLMs of different sources and scales demonstrate the effectiveness and generalizability of our proposed method. 
Under similar experimental settings, more powerful LLMs (such as Llama3-8B and Mistral-7B) achieved better metrics after being fine-tuned with KaLM, which also demonstrates the scalability of our method. 
It is worth noting that for LLMs of the same origin but different scales (Pythia-6.9B and Pythia-2.8B), the smaller-scale Pythia-2.8B benefited from a larger training batch size during fine-tuning. As a result, its final experimental metrics matched or even surpassed those of the more powerful Pythia-6.9B model. 
This also highlights the importance of large batch sizes for the embedding-based KGC task, suggesting that using more powerful computing resources and larger GPU memory could further enhance the effectiveness of the proposed KaLM method.

\begin{table*}[t]
\centering
\caption{More Embedding-based KGC results with various LLMs on WN18RR and FB15k-237.}
{\begin{tabular}{l|ccccc|ccccc}
\hline
\multirow{2}{*}{\bf Method} & \multicolumn{5}{c|}{\bf WN18RR}      & \multicolumn{5}{c}{\bf FB15k-237}  \\ \cline{2-11} & \bf MR & \bf MRR & \bf H@1 & \bf H@3 & \bf H@10 & \bf MR & \bf MRR & \bf H@1 & \bf H@3 & \bf H@10 \\ \hline
\multicolumn{11}{l}{\textit{structure-based methods}}         \\ \hline
TransE & 2300 & 0.243 & 0.043  & 0.441 & 0.532 & 323 & 0.279 & 0.198 & 0.376 & 0.441 \\
DistMult & 7000 & 0.444 & 0.412 & 0.470 & 0.504  & 512 &  0.281 & 0.199 & 0.301 & 0.446 \\
RotatE & 3340 & 0.476 & 0.428 & 0.492 & 0.571 & 177 & 0.338 & 0.241 & 0.375 & 0.533 \\ \hline
\multicolumn{11}{l}{\textit{description-based methods (autoencoder PLMs)}}         \\ \hline
KG-BERT & 97 & 0.216 & 0.041 & 0.302 & 0.524 & 153 & - & - & - & 0.420 \\ 
StAR & 51 & 0.401 & 0.243 & 0.491 & 0.709 & 117 & 0.296 & 0.205 & 0.322 & 0.482 \\ 
C-LMKE & 72 & 0.598 & 0.480 & 0.675 & 0.806 & 183 & \textbf{0.404} & \textbf{0.324} & \textbf{0.439} & \textbf{0.556} \\
SimKGC & - & \textbf{0.671} & \textbf{0.587} & \textbf{0.731} & 0.817 & - & \underline{0.333} & \underline{0.246} & \underline{0.362} & 0.510 \\ \hline
\multicolumn{11}{l}{\textit{description-based methods (autoregressive LLMs)}}         \\ \hline
Llama-2-7B  & 15969 & 0.010 & 0.004 & 0.010 &0.020 & 5359 & 0.006 & 0.002 & 0.004  & 0.012 \\
\textbf{Llama2-7B$_{KaLM}$}  & \textbf{19} & 0.556 & 0.409 & 0.656 & 0.851 & \textbf{114} & 0.299 & 0.204 & 0.325  & 0.502 \\ 
\textbf{Llama3-8B$_{KaLM}$}  & 23 & 0.588 & 0.446 & 0.676 & \underline{0.860} & 121 & 0.308 & 0.212 & 0.337  & 0.509 \\ 
\textbf{Mistral-7B$_{KaLM}$}  & \underline{20} & \underline{0.612} & \underline{0.484} & \underline{0.702} & \textbf{0.869} & \underline{116} & 0.317 & 0.225 & 0.351  & \underline{0.518} \\ 
\textbf{OPT-6.7B$_{KaLM}$}  & 24 & 0.514 & 0.397 & 0.603 & 0.822 & 126 & 0.288 & 0.199 & 0.312  & 0.486 \\ 
\textbf{Pythia-6.9B$_{KaLM}$}  & 28 & 0.508 & 0.394 & 0.598 & 0.818 & 130 & 0.289 & 0.199 & 0.310  & 0.484 \\ 
\textbf{Pythia-2.8B$_{KaLM}$}  & 30 & 0.539 & 0.398 & 0.644 & 0.829 & 133 & 0.292 & 0.205 & 0.318  & 0.489 \\ 
\hline
\end{tabular}}
\label{kgc-results-more}
\end{table*}

\subsection{More Experiments on Knowledge Inference Evaluation}
In Figure~\ref{kgqa-results-more}, we present additional knowledge inference results (generation-based KGQA) to demonstrate the effectiveness of KaLM in knowledge alignment. 
This section demonstrates the performance of various powerful LLMs (including Llama-2-7B, Llama-3-8B, and Mistral-7B) before and after fine-tuning with KaLM, across various knowledge graph question-answering tasks (including head entity prediction, tail entity prediction, relation prediction, and triple classification). 

The experimental results can be divided into three groups by color: the green series, blue series, and red series correspond to the KGQA results of Llama-2-7B, Llama-3-8B, and Mistral-7B before and after training, respectively. 
It can be observed that after fine-tuning with KaLM, all three LLMs achieved consistent improvements in prediction accuracy for the question-answering tasks. 

At the KGQA task level, the most significant overall improvements were observed in tail entity prediction (an average increase of 14.1\%) and triple classification (an average increase of 12.7\%), followed by relation prediction (an average increase of 8.6\%) and head entity prediction (an average increase of 6.9\%). At the LLM level, the most exciting improvements were seen in Llama-3-8B (an average increase of 11.1\%) and Mistral-7B (an average increase of 10.8\%), while Llama-2-7B showed relatively smaller gains (an average increase of 9.6\%). This suggests that our method demonstrates better scalability with more powerful LLMs.

\begin{figure*}[ht]
\centering 
\includegraphics[width=\linewidth]{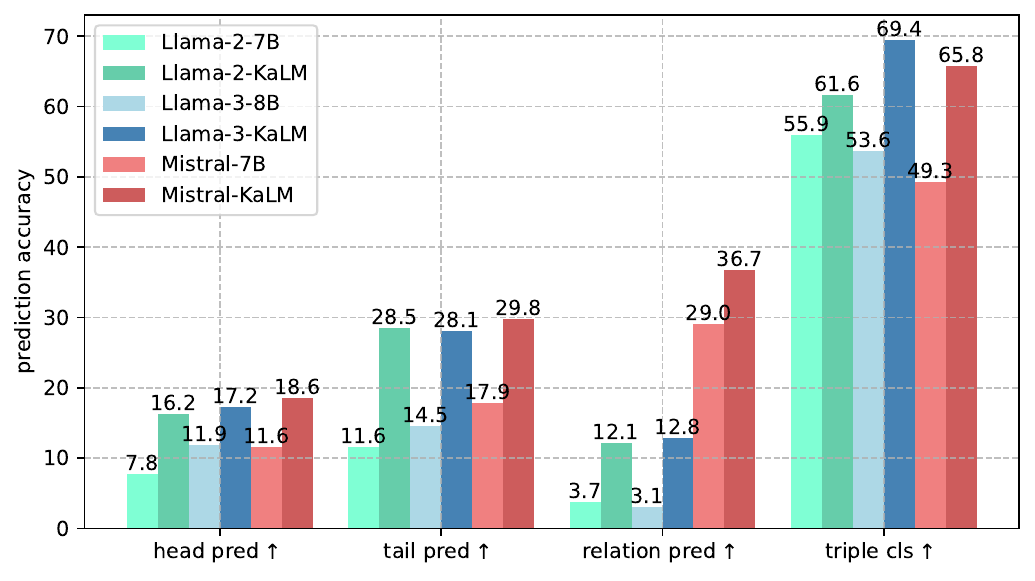}
\caption{Comparison of generative knowledge inference performance between Base LLMs and their fine-tuned KaLM versions, best viewed in three color groups. The symbol $\uparrow$ means higher is better and $\downarrow$ means lower is better.}
\label{kgqa-results-more}
\end{figure*}

\subsection{More Visualizations on Knowledge Representation Matrix}
From this section onward, unless stated otherwise, KaLM refers to the model checkpoint trained on Llama-2-7B using our method. 
We present more knowledge representation results to demonstrate the effectiveness of KaLM in knowledge alignment. 
Figure~\ref{fig-anisotropy-two} displays the sentence similarity matrix of several similar entity descriptions from the WN8RR dataset. 
Detailed information about entity names and descriptions can be found in Figure~\ref{ent-desc}. 
It is evident that KaLM can obtain more distinguishable knowledge representations, where the similarity between related entities (diagonal elements) is high, while the similarity between unrelated entities (off-diagonal elements) is low. 

\begin{figure*}[t]
\centering
\subfigure[LLaMA]{
    \begin{minipage}[b]{0.45\textwidth}
    \includegraphics[width=1\textwidth]{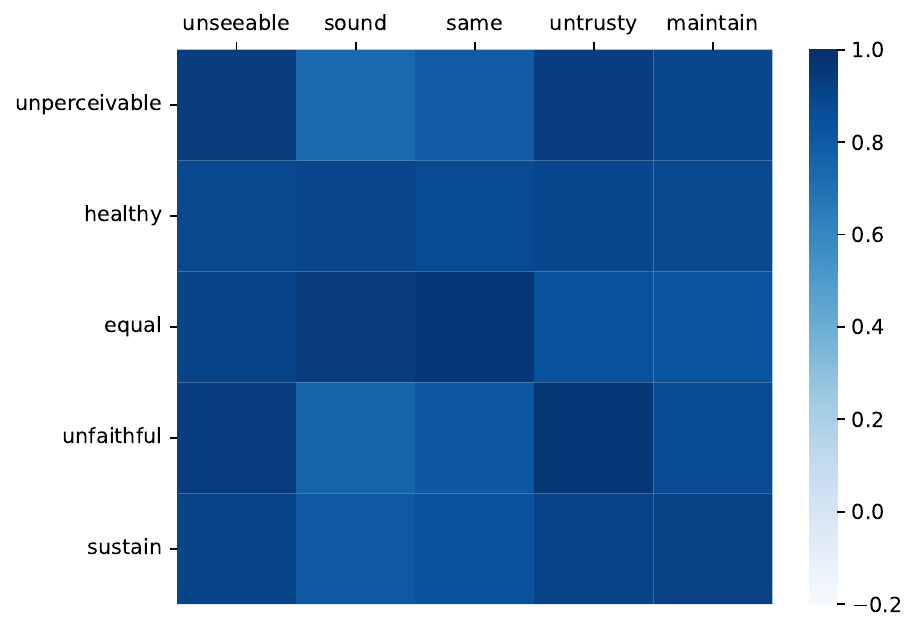}
    \end{minipage}
    \label{fig-anisotropy-two-base}
}
\quad
\subfigure[KaLM]{
    \begin{minipage}[b]{0.45\textwidth}
    \includegraphics[width=1\textwidth]{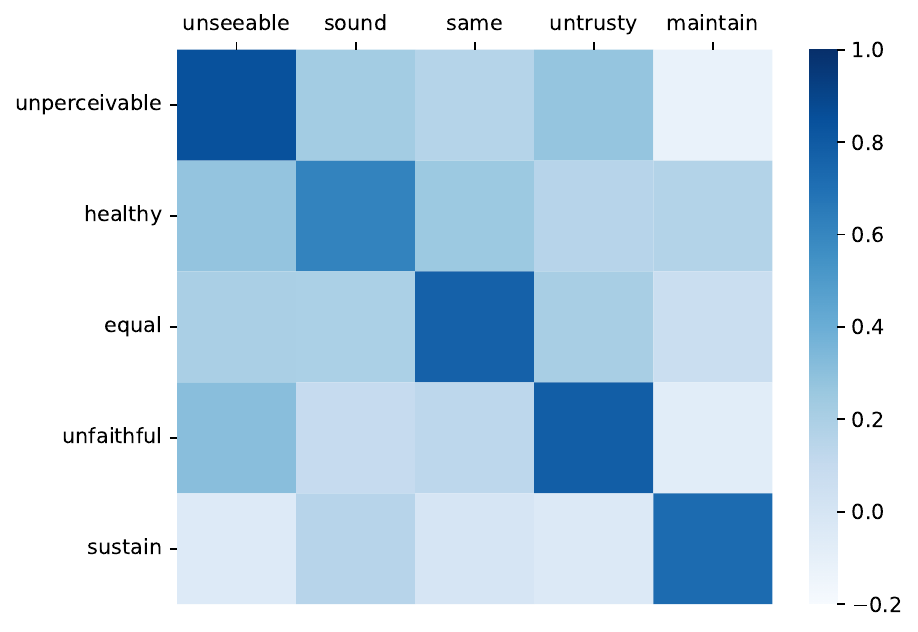}
    \end{minipage}
    \label{fig-anisotropy-two-kalm}
}
\caption{Similarity matrix of selected similar entity descriptions from the WN8RR dataset.}
\label{fig-anisotropy-two}
\end{figure*}

\begin{figure*}[ht]
\centering 
\includegraphics[width=\textwidth]{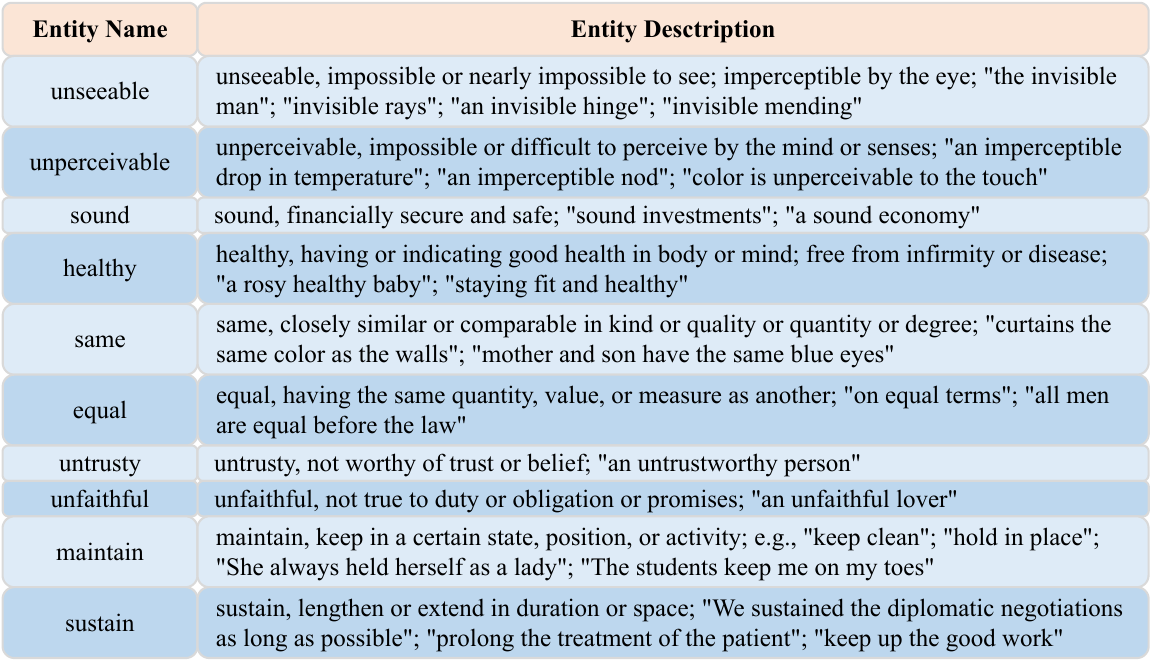}
\caption{Selected entities and their corresponding textual descriptions.}
\label{ent-desc}
\end{figure*}

\subsection{Detailed analysis of Representation Anisotropy}
We further analyze the sentence-level representation anisotropy on the Wikitext-103 test set using model checkpoints trained on the WN18RR dataset. The sentence-level anisotropy value for a given corpus $\{\mathcal{D}_i\}_{i=1}^N$ is defined in Equation~\ref{eq-anisotropy}, where a lower anisotropy value indicates better discriminative characteristics of sentence representations. 

\begin{figure*}[ht]
\begin{minipage}[t]{0.5\linewidth}
    \centering
    \includegraphics[width=1\textwidth]{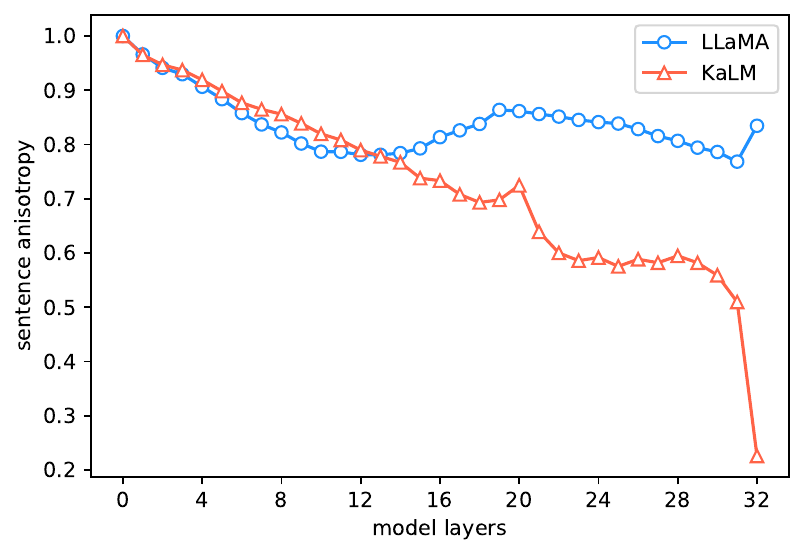}
    \caption{layer-wise analysis of anisotropy. The vertical axis represents the sentence-level representation anisotropy value on the Wikitext-103 test set, while the horizontal axis denotes the number of model layers.}
    \label{fig-anisotropy-chart-layer}
\end{minipage}
\quad
\begin{minipage}[t]{0.5\linewidth}
    \centering
    \includegraphics[width=1\textwidth]{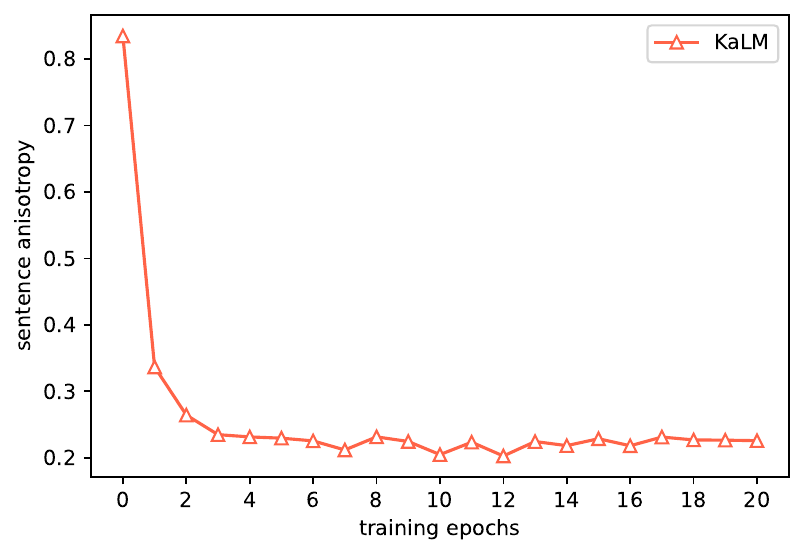}
    \caption{epoch-wise analysis of anisotropy. The vertical axis represents the sentence-level representation anisotropy value on the Wikitext-103 test set, while the horizontal axis denotes the number of training epochs.}
    \label{fig-anisotropy-chart-epoch}
\end{minipage}
\end{figure*}

Figure~\ref{fig-anisotropy-chart-layer} plots the anisotropy value over different layers for LLaMA and KaLM. We can observe that the anisotropy value of LLaMA consistently remains at a relatively high level, suggesting that the base LLM suffers from severe representation anisotropy issues. In contrast, our proposed KaLM notably mitigates this issue, with the anisotropy values decreasing gradually as the depth of the model increases, and dropping significantly from 0.5 to 0.2 at the output layer. 
The anisotropy values of the last layer for LLaMA and KaLM show that after training with our method, the sentence-level anisotropy value significantly decreased from 0.83 to 0.21. 
The results indicate that our method can effectively reduce the anisotropy of representations across layers in LLMs, resulting in a significant improvement in knowledge representation. 

Figure~\ref{fig-anisotropy-chart-epoch} analyzes the changes in anisotropy values during the model training process. The results show that the anisotropy values decrease rapidly after a few epochs of training and eventually stabilize at a low level. 
We assume that the initial epochs of training have completed the preliminary alignment of knowledge representation, while the subsequent training epochs mainly focus on integrating explicit and implicit representations. 

\section{Ablation Studies}
\label{sec:ablations}
In this section, we present concrete ablation studies to analyze the effectiveness of each component of our approach. We ablate the settings that led to the final design, including training objectives, fine-tuning modules, and training epochs. 
It is important to note that the results of the ablation experiments in this section were obtained from earlier runs on an NVIDIA 3090$\times$4 GPU, which may lead to slight differences compared to the full KGC results presented in the main text.

\subsection{The necessity of the implicit knowledge alignment objective (Equation~\ref{eq:l_pat})}
In Table~\ref{kgc-ablation-lambda}, we train the model using different loss weights (i.e., the $\lambda$ parameter in Equation~\ref{eq:final-obj}) and analyze its performance on the KGC task. Note that this experiment is conducted solely for ablation analysis, thus only 10 training epochs are used. 
Experimental results reveal that incorporating the implicit knowledge alignment objective (i.e., $\lambda > 0$) generally leads to better performance in KGC, indicating further improvement in knowledge representation. 
The best performance in KGC is achieved when $\lambda=0.1$. 
The results confirm that both explicit alignment and implicit alignment are crucial for knowledge alignment, as they both essentially require a deep understanding of knowledge. 

The implicit knowledge alignment objective focuses on incorporating textual patterns of knowledge into the LLM to prevent catastrophic forgetting of previous knowledge and maintain its generative capability. 
We also conducted additional perplexity (PPL) evaluation experiments to illustrate the impact of the implicit knowledge alignment loss. The additional results show that for the corresponding $\lambda = 0, 0.01, 0.1, 1.0$ in Table~\ref{kgc-ablation-lambda}, the model's PPL are \textbf{6.42}, \textit{4.96}, \textit{4.97}, and \textit{4.98}, respectively. Therefore, we can conclude that incorporating the implicit alignment loss maintains the model's language modeling capability, whereas not using the implicit alignment loss significantly impairs the model's generative ability.

\begin{table}[ht]
\centering
\caption{KGC results with different $\lambda$ in Equation~\ref{eq:final-obj}.}
\scalebox{0.75}{\begin{tabular}{l|ccccc|c}
\hline
\multirow{2}{*}{\bf Method} & \multicolumn{5}{c|}{\bf WN18RR} & \multirow{2}{*}{\bf PPL}    \\
\cline{2-6} & \bf MR & \bf MRR & \bf H@1 & \bf H@3 & \bf H@10 \\ \hline
KaLM ($\lambda=0$)  & 21.2 & 0.512 & 0.355 & 0.611 & 0.815 & 6.42\\ \hline
KaLM ($\lambda=0.01$)  & \textbf{19.8} & 0.510 & 0.352 & 0.604 & 0.818 & 4.96\\ \hline
KaLM ($\lambda=0.1$)  & 20.1 & \textbf{0.517} & \textbf{0.359} & \textbf{0.615} & \textbf{0.825} & 4.98\\ \hline
KaLM ($\lambda=1.0$)  & 21.6 & 0.500 & 0.336 & 0.596 & 0.806 & 4.98\\ \hline
\end{tabular}}
\label{kgc-ablation-lambda}
\end{table}

\subsection{The effects of fine-tuning different LLM modules using LoRA}
In Table~\ref{kgc-ablation-lora}, we fine-tune different modules of the model using the LoRA \cite{hu2021lora} method and analyze their performance on KGC tasks and PPL evaluations. Note that this experiment is conducted solely for ablation analysis, hence only 10 epochs of training were performed. 
``\textit{att}'' indicates fine-tuning only the attention module, ``\textit{ffn}'' indicates fine-tuning only the feed-forward network, and ``\textit{att-ffn}'' indicates fine-tuning both the attention module and the feed-forward network simultaneously. 
The results show that fine-tuning with the ``\textit{att-ffn}'' approach achieves the best KGC performance, but it also leads to higher PPL values, suggesting that the model's generation capability may be significantly compromised. 
Therefore, as a compromise, we choose the ``\textit{ffn}'' fine-tuning approach, maintaining moderate knowledge representation performance while preserving the original generation capability. 

These experimental results are consistent with the conclusions of \cite{he2021towards}, where the FFN learns local features and patterns within the input sequence, allowing it to directly capture task-specific text patterns. Meanwhile, attention provides the model with the ability to capture complex contextual relationships, which is key to LLMs' understanding and generation of natural language. 
Under the knowledge-aligned language modeling objective, we aim to align the internal knowledge representations of LLMs while preserving their inherent natural language generation capabilities. Therefore, directly fine-tuning the FFN layers can reduce resource consumption and maximize the effectiveness of KaLM fine-tuning.

\begin{table}[ht]
\centering
\caption{KGC results and PPL evaluation results when fine-tuning different network modules with LoRA.}
\scalebox{0.75}{\begin{tabular}{l|ccccc|c}
\hline
\multirow{2}{*}{\bf Method} & \multicolumn{5}{c|}{\bf WN18RR} & \multirow{2}{*}{\bf PPL}    \\
\cline{2-6} & \bf MR & \bf MRR & \bf H@1 & \bf H@3 & \bf H@10 \\ \hline
KaLM (att)  & 21.9 & 0.47.5 & 0.331 & 0.580 & 0.784 & 5.03\\ \hline
KaLM (ffn)  & 20.1 & 0.517 & 0.359 & 0.615 & 0.825 & 4.96\\ \hline
KaLM (att-ffn)  & \textbf{19.5} & \textbf{0.525} & \textbf{0.371} & \textbf{0.619} & \textbf{0.831} & 5.07\\ \hline
\end{tabular}}
\label{kgc-ablation-lora}
\end{table}

\subsection{The sustained gains and potential impacts of training for more epochs}
In Table~\ref{kgc-ablation-epoch}, we fine-tune the model using different numbers of training epochs and analyze their performance on KGC tasks. This experiment is mainly conducted to investigate whether additional training epochs can lead to further improvement in knowledge representations. 
The experimental results show that using more training epochs can continuously improve the performance of KaLM on the KGC task, resulting in higher MRR and Hit@k metrics. 
The model trained with our method consistently maintains an acceptable PPL value due to the implicit knowledge alignment objective. However, this also comes with more computational resource consumption and training time. As a result, we selected a moderate number of training epochs. 

\begin{table}[ht]
\centering
\caption{KGC results with different training epochs.}
\scalebox{0.75}{\begin{tabular}{l|ccccc|c}
\hline
\multirow{2}{*}{\bf Method} & \multicolumn{5}{c|}{\bf WN18RR} & \multirow{2}{*}{\bf PPL}    \\
\cline{2-6} & \bf MR & \bf MRR & \bf H@1 & \bf H@3 & \bf H@10 \\ \hline
KaLM (epoch=10)  & 20.1 & 0.517 & 0.359 & 0.615 & 0.825 & 4.96\\ \hline
KaLM (epoch=20)  & \textbf{19.6} & 0.554 & 0.402 & 0.650 & 0.848 & 4.98\\ \hline
KaLM (epoch=30)  & 21.9 & \textbf{0.576} & \textbf{0.427} & \textbf{0.673} & \textbf{0.854} & 5.00\\ \hline
\end{tabular}}
\label{kgc-ablation-epoch}
\end{table}

\end{document}